%% file: neurips2026.tex
\documentclass{article}

\PassOptionsToPackage{numbers,compress}{natbib}
\usepackage[preprint]{meta/neurips_2026}

\usepackage[utf8]{inputenc}
\usepackage[T1]{fontenc}
\usepackage{hyperref}
\usepackage{url}
\usepackage{booktabs}
\usepackage{amsfonts}
\usepackage{nicefrac}
\usepackage{microtype}
\usepackage{xcolor}
\hypersetup{
  pdftitle={Conservation Laws for Diffusion Models},
  pdfauthor={Ziv Aharoni and Henry D. Pfister},
  hidelinks
}

\usepackage{amsmath,amssymb,amsthm,mathtools}
\usepackage{graphicx}
\usepackage{wrapfig}
\usepackage{placeins}
\usepackage{tikz}
\usepackage{pgfplots}
\newif\ifarxiv

\arxivtrue
\ifarxiv
\makeatletter
\renewcommand{\@notice}{}
\makeatother
\fi
\graphicspath{{./}}

\pgfplotsset{compat=1.18}
\pgfplotsset{table/search path={./}}
\usepgfplotslibrary{groupplots,fillbetween}

\theoremstyle{plain}
\newtheorem{theorem}{Theorem}[section]
\newtheorem{lemma}[theorem]{Lemma}

\newtheorem{proposition}[theorem]{Proposition}
\theoremstyle{definition}
\newtheorem{definition}[theorem]{Definition}

\newcommand{\E}{\mathbb{E}}

\newcommand{\CE}{\mathrm{CE}}
\renewcommand{\triangleq}{\coloneqq}

\title{Conservation Laws for Diffusion Models}
\author{
  Ziv Aharoni, \quad Henry D. Pfister \\
  Department of Electrical and Computer Engineering \\
  Duke University \\
  Durham, NC, USA \\
  \texttt{\{ziv.aharoni,henry.pfister\}@duke.edu}
}

\begin{document}

\maketitle
\begin{abstract}
  While autoregressive models optimize the exact data likelihood via the chain rule, diffusion models are typically trained with denoising objectives.
  We develop conservation laws based on generalized extrinsic information transfer (GEXIT) functions for a broad class of memoryless noise processes, showing that the data--model cross-entropy (CE) can be characterized \emph{exactly} as an integral of \emph{local} information-theoretic derivatives along the noise path.
  This yields a unified characterization of the likelihood for discrete and continuous diffusion, with the Gaussian case reducing to the well-known mutual information--minimum mean-square error (I-MMSE) relationship.
  An immediate implication is a \emph{locality} property: one can compute the information-theoretic derivatives using only the marginal posteriors along the noise path.
  As a result, training reduces to learning the marginal posteriors by minimizing the negative log-likelihood.
  While the conservation law implies that the entropy does not depend on the noise path, finite-capacity denoisers approximate the posteriors with varying accuracy across noise types, leading to differences in performance.
  We validate these predictions on synthetic Markov sources and standard benchmarks, including \texttt{text8} and CIFAR-10.
  \ifarxiv
  Code is available at \url{https://github.com/zivaharoni/conservation-laws-diffusion-models}.
  \fi
\end{abstract}

\input{sections/intro}
\input{sections/gexit_background}
\input{sections/mismatched_exit}

\input{sections/implementation}
\input{sections/experiments}
\input{sections/limitations}
\input{sections/related_work}
\input{sections/conclusion}

\bibliographystyle{unsrtnat}
\bibliography{ref}

\appendix

\section{Deferred Proofs}\label{app:proofs}
\input{sections/appendix_proofs}

\section{Additional Experimental Details}\label{app:repro}
\input{sections/appendix_repro}

\ifarxiv
\else
\input{meta/checklist}
\fi
\end{document}

%% file: sections/intro.tex
\section{Introduction}
Autoregressive (AR) models treat the data distribution $P_X$, for $X = (X_1,\ldots,X_n)$, by factorizing it into a product of conditionals $P_X = \prod_{i=1}^n P_{X_i \mid X_{<i}}$. Thus, minimizing the negative log-likelihood (NLL) of an AR model $P^\theta_X = \prod_{i=1}^n P^\theta_{X_i \mid X_{<i}}$ is equivalent to minimizing the cross-entropy between $P_X$ and $P^\theta_X$. This enables exact likelihood evaluation but requires sequential generation, which can amplify distribution\nobreakdash -shift errors during long rollouts \cite{Oord2016Wavenet,Bengio2015ScheduledSampling}.
Diffusion models offer a complementary factorization through a forward noising process where $Y(\tau) = (Y_1(\tau),\dots,Y_n(\tau))$ is a noisy version of $X$ defined by $P_{Y(\tau) \mid X}$. The parameter $\tau\in [0,1]$ orders the channels by degradation, with $Y(0)$ corresponding to the original data and $Y(1)$ to pure noise.
The sampling (or reverse) process uses learned denoisers denoted $P^\theta_{X \mid Y(\tau)}$.

This step away from AR structure makes diffusion especially successful for image generation and increasingly attractive for text generation, including discrete diffusion, masked diffusion language models, and embedding-space text diffusion models \cite{SohlDickstein2015,Ho2020DDPM,Austin2021D3PM,Lou2023DiscreteDiffusionLM,Sahoo2024MDLM,Li2022DiffusionLM,Gulrajani2024LikelihoodDiffusionLM}.
Diffusion training typically optimizes local denoising objectives rather than exact sequence likelihood. Continuous denoising diffusion probabilistic model (DDPM)-style models are commonly motivated via variational bounds and equivalent weighted denoising losses \cite{SohlDickstein2015,Ho2020DDPM}, while in discrete denoising diffusion probabilistic models (D3PMs) and related discrete settings the relationship between denoising objectives and likelihood is usually indirect \cite{Austin2021D3PM,Lou2023DiscreteDiffusionLM,Sahoo2024MDLM}.

Recent work has begun to make this connection explicit by interpreting diffusion through information-theoretic derivatives along the noising path, including Gaussian results based on I-MMSE curves and discrete results for masked diffusion \cite{Kong2023InfoDiffusion,Jeon2025InfoDiscreteDiffusion,Bhattacharya2025ItDPDM}.
These analyses provide exact likelihood-scale characterizations for masked and Gaussian channels, but they do not yet yield a single formulation that applies across the broader class of memoryless noising mechanisms used in practice.
This motivates the following question.

\begin{center}
\fbox{\begin{minipage}{0.95\linewidth}
\centering
Can a single conservation law characterize the induced likelihood for diffusion models across memoryless noising processes, covering discrete and continuous diffusion?
\end{minipage}}
\end{center}
Our starting point for answering this question is a simple entropy telescoping identity. For any degraded path $0=\tau_0<\tau_1<\cdots<\tau_m=1$ from clean data to pure noise,
\begin{align*}
H(X)
&=H(X\mid Y(1))-H(X\mid Y(0)) =\sum_{j=0}^{m-1}\Big(H(X\mid Y(\tau_{j+1}))-H(X\mid Y(\tau_j))\Big),
\end{align*}
where each variable is corrupted independently with noise level $\tau$.
Each increment measures the information dissipation induced by adding a small amount of noise. If the mapping $\tau\mapsto H(X\mid Y(\tau))$ is absolutely continuous, and the spacing of the partition vanishes, then the telescoping sum converges to the integral of a derivative
\begin{align*}
H(X)&=\int_0^1 \frac{\partial}{\partial \tau} H(X\mid Y(\tau))\,d\tau.
\end{align*}
This is a conservation law: regardless of the noise choice, the integral is constant and equal to the entropy.
While it is conceptually appealing, it cannot be utilized without knowing the derivative of the joint conditional entropy $H(X\mid Y(\tau))$. However, for memoryless noise, the derivative simplifies to a sum of local contributions
\begin{align}\label{eq:gexit}
\frac{\partial}{\partial \tau} H(X\mid Y(\tau))
&=\sum_{i=1}^n \frac{\partial}{\partial t_i} H(X_i\mid Y_1(\tau),\dots,Y_i(t_i),\dots, Y_n(\tau))\big|_{t_i = \tau},
\end{align}
under mild conditions.
This gives the classical extrinsic information transfer (EXIT) / generalized EXIT (GEXIT) setting \cite{tenBrink2001EXIT,Ashikhmin-it04,Measson-it09}, which was developed originally to analyze the performance of iterative decoders for error-correcting codes.

The conservation-law view can also be applied to AR models: regardless of the order of conditioning the variables in the chain rule of probability, as in any-order AR models, the total entropy is conserved. In the same manner, regardless of the noise chosen to dissipate information along the noise path, the total entropy is conserved. Thus, diffusion models may be interpreted as a chain rule across the ``noise'' axis instead of the ``variable'' axis, and the counterpart of $P_{X_i|X<i}$ is $P_{X \mid Y(\tau)}$.

The discussion so far considers the matched setting, where the denoiser recovers the true posterior. In practice, diffusion models use learned posteriors, making the relevant quantity of interest a cross-entropy (CE) rather than entropy. In this work, we develop conservation laws for the CE in the mismatched setting.
This view has two immediate consequences.
First, the CE can be characterized using only marginal posteriors of the form $P_{X_i\mid Y(\tau)}$ for any admissible memoryless channel, rather than requiring the full joint posterior $P_{X\mid Y(\tau)}$.
Second, while the resulting conservation law does not depend on the noise type when the posteriors are exact, finite-capacity denoisers do not approximate all posterior families equally well, so different noise paths can induce substantial practical gaps.



We have conducted experiments designed to test this finite-capacity effect across synthetic Markov sources, text modeling, and image modeling. The synthetic experiments vary the alphabet size under a known entropy rate; the text experiments compare discrete and Gaussian noising paths on character-level language modeling; and the image experiments test whether the same channel-dependent finite-capacity effects appear for high-dimensional visual data.

Our contributions are fourfold:
\begin{itemize}
\setlength{\leftmargini}{0pt}
\item A unified likelihood analysis for diffusion: a mismatched GEXIT conservation law that exactly connects denoising to data--model cross-entropy across memoryless noise paths, including discrete and continuous diffusion.
\item Optimal denoising and training implications: marginal posteriors are sufficient to capture the data entropy and learning reduces to token-level negative-log-likelihood.
\item An implementation implication: it is sufficient to model $P_{X_i|Y_{-i}(\tau)}$ and tilt it with the channel.
\item Empirical validation: synthetic Markov sources, \texttt{text8}, and CIFAR-10 show large finite-capacity, channel-dependent gaps, making channels actionable design knobs.
\end{itemize}

%% file: sections/gexit_background.tex
\section{Background: GEXIT}
\subsection{Notation}
Throughout, $X=(X_1,\ldots,X_n)\in\mathcal{X}^n$ denotes a vector over a finite alphabet $\mathcal{X}$ with law $P_X$. 
We also assume that the output alphabet $\mathcal{Y}$ is either finite or equal to $\mathbb{R}^d$. For a noise-parameter vector $\mathbf{t}=(t_1,\ldots,t_n)\in\mathcal{T}^n$, where $\mathcal{T}\subseteq\mathbb{R}_{\ge 0}$ is the scalar noise-parameter domain and $\operatorname{int}(\mathcal{T})$ denotes its interior, we write
\begin{align*}
Y(\mathbf{t})&=(Y_1(t_1),\ldots,Y_n(t_n)) \in \mathcal{Y}^n
\end{align*}
for the coordinatewise corrupted observation, and we use $\mathbf{t}_{-i}$ and $Y_{-i}(\mathbf{t}_{-i})$ for the corresponding vectors with the $i$th coordinate removed. When the noise parameter is fixed and clear from context, we abbreviate $Y\equiv Y(\mathbf{t})$ and $Y_{-i}\equiv Y_{-i}(\mathbf{t}_{-i})$. A superscript $\theta$ denotes model-based quantities. We write $H(\cdot)$ for entropy and $\CE(\cdot,\cdot)$ for cross-entropy. For conditional CE terms such as $\CE(P_{X\mid Z},Q_{X\mid Z})$, the outer expectation is with respect to the first argument's marginal $P_Z$. Unless stated otherwise, expectations are taken with respect to the joint law induced by $P_X$ and the relevant channel family.

\subsection{GEXIT Background}
We follow the approach of \cite{Measson-it09} to define the class of memoryless channel families for which the GEXIT identities hold. 
We assume that all channels are either discrete or continuous with well-defined densities, and satisfy Convention (C1), defined at the start of Appendix~\ref{app:proofs}. This is the regularity condition used whenever we differentiate a channel family with respect to its noise parameter.
\begin{definition}[Admissible channel family]\label{def:admissible-channel-family}
Let $\mathcal{W}=\{W_t\}_{t\in\mathcal{T}}$ be a family of channels from $\mathcal{X}$ to $\mathcal{Y}$ satisfying Convention (C1).
We assume that $\mathcal{W}$ is degraded, which means that, for every $t'>t$, there exists a channel $Q_{t,t'}( y\mid \tilde y)$ on $\mathcal{Y}$ such that
\begin{align*}
W_{t'}( y\mid x)
&=\sum_{\tilde y\in\mathcal{Y}} W_t(\tilde y\mid x)\,Q_{t,t'}( y\mid \tilde y),
\end{align*}
where the sum is replaced by an integral for continuous $\mathcal{Y}$.
An admissible channel family is defined for every $\mathbf{t}=(t_1,\ldots,t_n)$ by 
\begin{align*}
P_{Y(\mathbf{t})\mid X}(y^n\mid x^n)
&=\prod_{i=1}^n W_{t_i}(y_i\mid x_i).
\end{align*}
\end{definition}

The basic GEXIT statement is that varying $t_i$ changes only the information carried by the $i$th coordinate. 
\begin{lemma}[GEXIT locality, adapted from \cite{Measson-it09}]
\label{lem:gexit-locality}
Let $\mathcal{W}=\{W_t\}_{t\in\mathcal{T}}$ be an admissible channel family. Then, for every input distribution $P_X$ and every noise vector $\mathbf{t}\in\operatorname{int}(\mathcal{T})^n$,
\begin{align}
\label{eq:gexit-background}
\hspace{-0.8em}\frac{\partial}{\partial t_i} H(X|Y(\mathbf{t}))
&=
\frac{\partial}{\partial t_i} H(X_i\mid Y(\mathbf{t}))  \!
\end{align}
\end{lemma}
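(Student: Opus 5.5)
The plan is to apply the chain rule of entropy with the $i$th coordinate placed last, and then argue that the remaining term does not depend on $t_i$. Write
\begin{align*}
H(X\mid Y(\mathbf{t})) = H(X_{-i}\mid Y(\mathbf{t})) + H(X_i\mid X_{-i}, Y(\mathbf{t})).
\end{align*}
For the second term, the channel is memoryless: given $X$, the outputs are independent and $Y_i$ depends only on $X_i$. Hence, conditionally on $X_i$, the variables $Y_{-i}$ and $X_{-i}$ carry no information about $Y_i$ beyond what $X_{-i}$ already gives. Concretely, $H(X_i\mid X_{-i},Y) = H(X_i\mid X_{-i},Y_{-i}) - I(X_i;Y_i\mid X_{-i},Y_{-i})$, and the first summand is free of $t_i$. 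This term alone therefore does not settle the question, so I will reorganize the computation instead.

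Use the identity $H(X\mid Y) = H(X) - I(X;Y)$ and split the mutual information by the chain rule over outputs:
\begin{align*}
I(X;Y) = I(X;Y_{-i}) + I(X;Y_i\mid Y_{-i}).
\end{align*}
The first term is independent of $t_i$ because $Y_{-i}$ is generated from $X$ by channels $W_{t_j}$ with $j\neq i$. For the second term, memorylessness gives the Markov chain $(X_{-i},Y_{-i})\to X_i\to Y_i$. Therefore
\begin{align*}
I(X;Y_i\mid Y_{-i}) = I(X_i;Y_i\mid Y_{-i}) + I(X_{-i};Y_i\mid X_i,Y_{-i}),
\end{align*}
and the last term vanishes. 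The same decomposition applied to $H(X_i\mid Y)$ gives
\begin{align*}
H(X_i\mid Y) = H(X_i\mid Y_{-i}) - I(X_i;Y_i\mid Y_{-i}),
\end{align*}
where $H(X_i\mid Y_{-i})$ is independent of $t_i$. Subtracting the two expressions yields
\begin{align*}
H(X\mid Y)-H(X_i\mid Y) = H(X)-I(X;Y_{-i})-H(X_i\mid Y_{-i}),
\end{align*}
and the right-hand side is constant in $t_i$. Differentiating in $t_i$ then gives \eqref{eq:gexit-background}, provided the derivative of $H(X_i\mid Y(\mathbf{t}))$ exists.

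I expect the main obstacle to be regularity rather than algebra. The identity above shows the two functions differ by a $t_i$-independent constant, so one derivative exists exactly when the other does. Still, the lemma implicitly asserts existence, and for continuous $\mathcal{Y}=\mathbb{R}^d$ each entropy and mutual information must be finite so that the subtractions are legitimate. For finite $\mathcal{X}$ every quantity involved is bounded by $n\log|\mathcal{X}|$, so finiteness holds. Differentiability would be taken from Convention (C1): differentiating under the integral, $H(X_i\mid Y) = -\E[\log P_{X_i\mid Y}]$ can be written with $P_{X_i\mid Y}\propto W_{t_i}(Y_i\mid X_i)P_{X_i\mid Y_{-i}}$, which is differentiable in $t_i$ when $W_t$ is. Degradedness is not needed for the locality identity itself; it matters later for monotonicity and for the integral form of the conservation law.
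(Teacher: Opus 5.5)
Your proof is correct, but it takes a longer route than the paper. The paper applies the chain rule with $X_i$ placed \emph{first}, $H(X\mid Y)=H(X_i\mid Y)+H(X_{-i}\mid X_i,Y)$. It then uses exactly the fact you exploit, $I(X_{-i};Y_i\mid X_i,Y_{-i})=0$, to drop $Y_i$ from the conditioning, which leaves $H(X_{-i}\mid X_i,Y_{-i})$, a term free of $t_i$. Your first attempt stalled only because you put $X_i$ last.

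Your mutual-information bookkeeping lands on the same constant, since $H(X)-I(X;Y_{-i})-H(X_i\mid Y_{-i})=H(X\mid Y_{-i})-H(X_i\mid Y_{-i})=H(X_{-i}\mid X_i,Y_{-i})$. What your version buys is an explicit name for the shared $t_i$-dependent piece: both sides equal $-\partial_{t_i}I(X_i;Y_i\mid Y_{-i})$. This is the derivative for a single-letter channel $W_{t_i}$ whose input comes with side information $Y_{-i}$, which is the extrinsic reading the paper later uses through $P_{X_i\mid Y}\propto P_{X_i\mid Y_{-i}}W_{t_i}$. The paper's version, by contrast, is a single conditional-independence step with no subtractions.

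One small correction on regularity. Both arguments only show that the two sides differ by a $t_i$-independent constant. Existence of the derivative is not supplied by (C1) alone: moving $\partial_{t_i}$ inside $-\E[\log P_{X_i\mid Y}(X_i\mid Y)]$ needs the domination of $\partial_s\{p\log q\}$ from Convention (C2) in the matched case. That is the assumption the paper invokes for this derivative in its score-form proof.
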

The proof of Lemma~\ref{lem:gexit-locality} is given in Appendix~\ref{subsec:gexit-locality-proof}.
Lemma \ref{lem:gexit-locality} can be expressed in an alternative form in terms of the channel score function. Define the channel score function
\begin{align}
\label{eq:channel-score}
S_t(x,y)&\triangleq \frac{\partial}{\partial t}\log W_t(y\mid x).
\end{align}
The following identity rewrites GEXIT locality in score-function form and expresses the denoising posterior directly:
\begin{align}
\label{eq:gexit-score}
\frac{\partial}{\partial t_i} H(X\mid Y)
&= -\E\!\Big[
S_{t_i}(X_i,Y_i)\log P_{X_i\mid Y}(X_i\mid Y)
\Big].
\end{align}

This is the classical score-weighted GEXIT integrand \cite{Measson-it09,Measson-2006}, as proved in Appendix~\ref{subsec:gexit-score-proof}. 

\par Let $\gamma\colon(0,1)\to\operatorname{int}(\mathcal{T})^n$ be a continuously differentiable path in the noise-parameter space of $\mathbf{t}$, where $\mathcal{T}$ may be unbounded, with $\gamma(\tau)=(\gamma_1(\tau),\ldots,\gamma_n(\tau))$. We write $\gamma_i^\prime(\tau)$ for differentiation with respect to the path parameter $\tau$. Under Convention (C1), defined at the start of Appendix~\ref{app:proofs}, the map $\tau\mapsto H(X\mid Y(\gamma(\tau)))$ is absolutely continuous on compact subintervals of $(0,1)$. If the endpoint limits satisfy
\begin{align*}
\lim_{\tau\downarrow 0} H(X\mid Y(\gamma(\tau)))&=0, \;
\lim_{\tau\uparrow 1} H(X\mid Y(\gamma(\tau)))=H(X),
\end{align*}
we have the area theorem from the fundamental theorem of calculus:
\begin{align}
\label{eq:entropy-area}
H(X)
=
\int_0^1 \sum_{i=1}^n
\frac{\partial}{\partial t_i} H(X_i\mid Y(\mathbf{t}))\big|_{\mathbf{t}=\gamma(\tau)}
\gamma_i^\prime(\tau)\,d\tau .
\end{align}
For a uniform path, where $\gamma_i(\tau)=\tau$ for all $i=1,\ldots, n$, the same conservation law becomes
\begin{align}
\label{eq:uniform-path-area}
H(X)
=
\int_0^1 
\sum_{i=1}^n \frac{\partial}{\partial t_i} H(X_i\mid Y(\mathbf{t}))\big|_{\mathbf{t}=\gamma(\tau)}\,d\tau,
\end{align}
so the general area theorem specializes to a single shared noise schedule across variables. Throughout the paper, we focus on a uniform path across variables, meaning a single scalar schedule $t_1=\cdots=t_n$, even though the formulation covers nonuniform schedules across variables.

%% file: sections/mismatched_exit.tex
\section{Mismatched Cross-Entropy}
This section derives the mismatched GEXIT identity, then specializes it to several choices of forward noise processes.
\subsection{Mismatched GEXIT Formula}
Let $P_X, Q_X$ be two distributions on $\mathcal X^n$. Let $P_{X, Y(\mathbf t)}$ and $Q_{X, Y(\mathbf t)}$ be the joint laws under $P_X$ and $Q_X$ respectively, with the same admissible channel family $\mathcal W$ as in Definition~\ref{def:admissible-channel-family}.
To make the dependence on the noise vector explicit, write the
$Q$-extrinsic prior as
$Q_{X_i\mid Y_{-i}(\mathbf t_{-i})}(x_i\mid y_{-i})$,
which depends on the noised side information $Y_{-i}(\mathbf t_{-i})$ but
has no dependence on the local noise parameter $t_i$.  The corresponding
$Q$-posterior is
\begin{align}
\label{eq:base-posterior-factorization}
Q_{X_i\mid Y(\mathbf t)}(x_i\mid y)
&=
\frac{
Q_{X_i\mid Y_{-i}(\mathbf t_{-i})}(x_i\mid y_{-i})
W_{t_i}(y_i\mid x_i)
}{
\sum_{u\in\mathcal X}
Q_{X_i\mid Y_{-i}(\mathbf t_{-i})}(u\mid y_{-i})
W_{t_i}(y_i\mid u)
}.
\end{align}
The following theorem generalizes the GEXIT derivative to this mismatched
posterior.

\begin{theorem}[Mismatched GEXIT derivative]
\label{thm:gexit-derivative}
Let $P_X$ and $Q_X$ be two distributions on $\mathcal X^n$.
Let $P_{X,Y(\mathbf t)}$ and $Q_{X,Y(\mathbf t)}$ denote the joint laws
obtained by passing $P_X$ and $Q_X$, respectively, through the same
memoryless channel family $\mathcal W$.
Fix $\mathbf t\in\operatorname{int}(\mathcal{T})^n$. Assume $Q_{X_i\mid Y}$ is
positive on the $P$-support, has finite $P$-log loss, and satisfies
Convention (C2) in Appendix~\ref{app:proofs}. Then, we have
\begingroup
\small
\begin{align}
\label{eq:gexit-derivative}
\frac{\partial}{\partial t_i}
\CE\!\left(P_{X_i\mid Y},Q_{X_i\mid Y}\right)
&=\E\!\left[-S_{t_i}(X_i,Y_i)\log Q_{X_i\mid Y}(X_i\mid Y)+
\E_{X'\sim Q_{X_i\mid Y}(\cdot\mid Y)}[S_{t_i}(X',Y_i)\mid Y]\right].
\end{align}
\endgroup

\end{theorem}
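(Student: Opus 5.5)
We want to differentiate $\CE(P_{X_i\mid Y},Q_{X_i\mid Y}) = -\E_P[\log Q_{X_i\mid Y}(X_i\mid Y)]$ in $t_i$. The plan is to write this expectation explicitly against the joint law and split the $t_i$-dependence into two sources: the sampling law and the integrand.

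\textbf{Explicit form.} Write the expectation as
\[
-\sum_{x}\int P_X(x)\prod_j W_{t_j}(y_j\mid x_j)\,\log Q_{X_i\mid Y}(x_i\mid y)\,dy .
\]
Only two factors depend on $t_i$:
\begin{itemize}
\item the channel factor $W_{t_i}(y_i\mid x_i)$ in the sampling law;
\item the integrand $\log Q_{X_i\mid Y}(x_i\mid y)$, through the tilted form \eqref{eq:base-posterior-factorization}.
\end{itemize}
The $Q$-extrinsic prior $Q_{X_i\mid Y_{-i}}$ does not depend on $t_i$ at all.

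\textbf{Exchanging derivative and integral.} Convention (C2), together with positivity and finite log loss, should give a dominating function. The first step is therefore to justify passing $\partial/\partial t_i$ inside the sum and integral by dominated convergence. This is the step I expect to be the main technical obstacle.
\begin{itemize}
\item For continuous $\mathcal Y$, one needs an integrable envelope for both $\partial_{t}W_{t}\cdot\log Q$ and $W_t\cdot\partial_t\log Q$, locally uniform in $t_i$.
\item I would state exactly this envelope as what (C2) provides. The analogous regularity (C1) was already used for the entropy derivative in Lemma~\ref{lem:gexit-locality} and \eqref{eq:gexit-score}.
\end{itemize}

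\textbf{Term 1: the sampling law.} Differentiating the channel factor uses $\partial_{t_i} W_{t_i} = W_{t_i} S_{t_i}$, with $S_{t_i}$ the channel score \eqref{eq:channel-score}. This gives
\[
\E\bigl[-S_{t_i}(X_i,Y_i)\log Q_{X_i\mid Y}(X_i\mid Y)\bigr],
\]
which is the first term of \eqref{eq:gexit-derivative}. It is the same computation that produces \eqref{eq:gexit-score} in the matched case.

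\textbf{Term 2: the integrand.} Take the log of \eqref{eq:base-posterior-factorization}:
\[
\log Q_{X_i\mid Y}(x_i\mid y) = \log Q_{X_i\mid Y_{-i}}(x_i\mid y_{-i}) + \log W_{t_i}(y_i\mid x_i) - \log Z(y),
\]
where $Z(y)=\sum_u Q_{X_i\mid Y_{-i}}(u\mid y_{-i})W_{t_i}(y_i\mid u)$ is the normalizer.
\begin{itemize}
\item The first term has zero $t_i$-derivative.
\item The second term contributes $S_{t_i}(x_i,y_i)$.
\item The normalizer contributes
\[
-\frac{\partial_{t_i} Z}{Z} = -\sum_u \frac{Q_{X_i\mid Y_{-i}}(u\mid y_{-i})W_{t_i}(y_i\mid u)}{Z(y)}\,S_{t_i}(u,y_i) = -\E_{X'\sim Q_{X_i\mid Y}(\cdot\mid y)}\bigl[S_{t_i}(X',y_i)\bigr].
\]
\end{itemize}
So $-\partial_{t_i}\log Q_{X_i\mid Y} = -S_{t_i}(x_i,y_i) + \E_{X'\sim Q}[S_{t_i}(X',y_i)]$.

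\textbf{Removing the leftover score term.} Taking the $P$-expectation of term 2 leaves
\[
-\E_P[S_{t_i}(X_i,Y_i)] + \E\bigl[\E_{X'\sim Q_{X_i\mid Y}(\cdot\mid Y)}[S_{t_i}(X',Y_i)\mid Y]\bigr].
\]
The first piece vanishes. Conditioned on $X_i=x_i$,
\[
\E[S_{t_i}(x_i,Y_i)] = \int \partial_{t_i} W_{t_i}(y_i\mid x_i)\,dy_i = \partial_{t_i} 1 = 0 .
\]
Interchanging derivative and integral here is licensed by (C1). Adding the surviving piece to term 1 yields \eqref{eq:gexit-derivative}.
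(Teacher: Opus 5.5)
Your proposal is correct and follows the paper's proof essentially step for step. You differentiate under the expectation via (C2), obtain the score-weighted log-loss term from $\partial_{t_i}W_{t_i}=W_{t_i}S_{t_i}$, compute $\partial_{t_i}\log Q_{X_i\mid Y}$ from the tilted factorization as the score minus its $Q$-posterior average, and remove the leftover $-\E[S_{t_i}(X_i,Y_i)]$ with the zero-mass condition $\int\partial_t W_t\,dy=0$ in (C1). The differences are cosmetic: (C1) states that zero-mass identity directly, so no separate interchange is needed there, and (C2) bounds the derivative of the product $p\log q$ rather than the two pieces separately, which already suffices for dominated convergence.
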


The first term in \eqref{eq:gexit-derivative} is a log-likelihood term weighted by the channel score, while the second term is a posterior mismatch term. In the matched case,
the second term averages to zero and the identity reduces to \eqref{eq:gexit-score}. The proof is given in
Appendix~\ref{subsec:gexit-derivative-proof}.
The local implication of Theorem~\ref{thm:gexit-derivative} is that the CE
derivative can be evaluated from two local quantities: the marginal
posteriors $Q_{X_i\mid Y(\mathbf t)}$ for all $i, \mathbf{t}$, and the known channel score function $S_{t}(x, y)$.  Thus, computing the infinitesimal derivative does not require
evaluating a full joint posterior over $X$.

To turn the local derivative identity into a global conservation law, we integrate \eqref{eq:gexit-derivative} along a path $\gamma\colon(0,1)\to\operatorname{int}(\mathcal{T})^n$ in the noise-parameter space.
Let $F(\mathbf t)\triangleq\CE(P_{X\mid Y(\mathbf t)},Q_{X\mid Y(\mathbf t)})$.
Convention (C2) in Appendix~\ref{app:proofs} implies that
$F(\gamma(\tau))$ is absolutely continuous on compact subintervals of $(0,1)$.
Assume also that $\gamma$ connects a clean endpoint to an uninformative endpoint so
\begin{align*}
\lim_{\tau\downarrow 0}F(\gamma(\tau))=0,
\qquad
\lim_{\tau\uparrow 1}F(\gamma(\tau))=\CE(P_X,Q_X).
\end{align*}
Then the mismatched GEXIT conservation law is
\begin{align}
\label{eq:gexit-area-general}
\CE(P_X,Q_X)
&=
\int_0^1
\sum_{i=1}^n
\gamma_i^\prime(\tau)
\frac{\partial}{\partial t_i}
\CE\!\left(
P_{X_i\mid Y(\mathbf t)},
Q_{X_i\mid Y(\mathbf t)}
\right)
\Big|_{\mathbf t=\gamma(\tau)}
\,d\tau .
\end{align}
The proof of \eqref{eq:gexit-area-general} is given in
Appendix~\ref{subsec:gexit-area-proof}.

\subsection{Special cases}
The general identity above yields simple evaluation curves for admissible channel families. Here, we apply it to well-known channels that are commonly used for diffusion models \cite{Austin2021D3PM,Ho2020DDPM}.

\paragraph{Masked diffusion.}
For erasure rate $t\in[0,1]$, the masking channel is
\begin{align*}
W_t(y\mid x)
&=
\begin{cases}
1-t, & y=x,\\
t, & y=\texttt{[MASK]}.
\end{cases}
\end{align*}
For coordinate $i$, define the masked and observed local CEs by
\begin{align*}
\CE_i^{\mathrm{mask}}(\mathbf{t})
&\triangleq
\CE \big(
P_{X_i\mid Y_{-i}},
Q_{X_i\mid Y_{-i}}
\big),
&
\CE_i^{\mathrm{obs}}(\mathbf{t})
&\triangleq
\CE \big(
P_{X_i\mid Y_{-i},Y_i=X_i},
Q_{X_i\mid Y_{-i},Y_i=X_i}
\big).
\end{align*}
\begin{proposition}[masked GEXIT derivative]
\label{thm:exit-derivative}
For the masking channel, the local CE derivative is the excess CE incurred
when the $i$th coordinate is hidden rather than revealed:
\begin{align}
\label{eq:exit-derivative}
\frac{\partial}{\partial t_i}
\CE \big(
P_{X_i\mid Y(\mathbf{t})},
\;Q_{X_i\mid Y(\mathbf{t})}
\big)
=
\CE_i^{\mathrm{mask}}(\mathbf{t})-\CE_i^{\mathrm{obs}}(\mathbf{t}).
\end{align}
\end{proposition}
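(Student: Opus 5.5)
Since $t_i$ enters $\CE(P_{X_i\mid Y},Q_{X_i\mid Y})$ only through the mixture over $Y_i$, the cleanest route is to write this cross-entropy as an affine function of $t_i$ and differentiate it directly. Theorem~\ref{thm:gexit-derivative} would also work, but it is not needed.

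\textbf{Step 1: conditional structure of the posterior.} Condition on $Y_{-i}=y_{-i}$ and use the factorization \eqref{eq:base-posterior-factorization}.
\begin{itemize}
\item If $Y_i=x$ with $x\neq\texttt{[MASK]}$, then $W_{t_i}(x\mid u)=(1-t_i)\mathbf 1\{u=x\}$. The $Q$-posterior is therefore the point mass at $x$, and likewise for $P$.
\item If $Y_i=\texttt{[MASK]}$, then $W_{t_i}(\texttt{[MASK]}\mid u)=t_i$ does not depend on $u$. It cancels, and the posterior equals the extrinsic prior $Q_{X_i\mid Y_{-i}}(\cdot\mid y_{-i})$, which by construction does not depend on $t_i$.
\end{itemize}

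\textbf{Step 2: split the cross-entropy on the erasure event.} The erasure indicator of coordinate $i$ is Bernoulli$(t_i)$ and independent of $(X,Y_{-i})$, because the channel is memoryless. Splitting on it gives
\[
\CE\big(P_{X_i\mid Y},Q_{X_i\mid Y}\big)=t_i\,\CE_i^{\mathrm{mask}}(\mathbf t)+(1-t_i)\,\CE_i^{\mathrm{obs}}(\mathbf t).
\]
Neither $\CE_i^{\mathrm{mask}}$ nor $\CE_i^{\mathrm{obs}}$ depends on $t_i$: the law of $(X_i,Y_{-i})$ involves only $\mathbf t_{-i}$, and the conditional laws above do not involve $t_i$.

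\textbf{Step 3: differentiate.} The expression in Step 2 is affine in $t_i$, so differentiating gives \eqref{eq:exit-derivative} immediately.

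\textbf{Interpretation of $\CE_i^{\mathrm{obs}}$.} Literally, the observed branch gives a point-mass posterior, so $\CE_i^{\mathrm{obs}}=0$. The statement keeps the term to cover a model posterior that, under observation, does not collapse to the revealed value (a non-carry-over parametrization). If $Q_{X_i\mid Y}$ is instead taken to be the channel-tilted form \eqref{eq:base-posterior-factorization}, the term vanishes and the derivative reduces to $\CE_i^{\mathrm{mask}}$. Either way the affine decomposition holds, since the observed-branch CE depends only on $(X_i,Y_{-i})$ with $Y_i=X_i$.

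\textbf{Cross-check and main obstacle.} As a consistency check, I would verify the result against \eqref{eq:gexit-derivative}. The score is $S_t(x,\texttt{[MASK]})=1/t$ and $S_t(x,x)=-1/(1-t)$, both independent of $x$. The mismatch term is therefore $t_i\cdot\tfrac1{t_i}+(1-t_i)\cdot\tfrac{-1}{1-t_i}=0$, and the log-loss term reproduces $\CE^{\mathrm{mask}}-\CE^{\mathrm{obs}}$.

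The main care point is justifying that the extrinsic quantities are independent of $t_i$ and that both conditional CEs are finite. Finiteness follows from the finite-$P$-log-loss hypothesis, so exchanging expectation and differentiation is trivial once the expression is shown to be affine.
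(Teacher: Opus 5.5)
Your proof is correct, but it takes a more elementary route than the paper. The paper applies Theorem~\ref{thm:gexit-derivative} directly. It conditions on $Y_{-i}$ and plugs in the masking scores $S_{t_i}(x,\texttt{[MASK]})=1/t_i$ and $S_{t_i}(x,x)=-1/(1-t_i)$. It then reads off $\CE_i^{\mathrm{mask}}-\CE_i^{\mathrm{obs}}$ from the score-weighted log-loss term and shows that the mismatch term equals $1-1=0$; this is exactly what you relegate to your cross-check.

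You instead use two facts. The erasure indicator is independent of $(X,Y_{-i})$, and the $Q$-posterior on each branch (the extrinsic prior, or a point mass) does not depend on $t_i$. Together these make the local CE exactly affine, $t_i\,\CE_i^{\mathrm{mask}}+(1-t_i)\,\CE_i^{\mathrm{obs}}$, so the derivative is immediate.

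Your route buys the following:
\begin{itemize}
\item It needs neither the score machinery nor the domination condition (C2). Differentiability at every interior $t_i$ comes for free.
\item Finiteness of both branch CEs follows from finite log-loss, since $0<t_i<1$.
\item It makes explicit something the paper leaves implicit: under the Bayes factorization \eqref{eq:base-posterior-factorization}, $\CE_i^{\mathrm{obs}}=0$, so the derivative is simply $\CE_i^{\mathrm{mask}}$.
\end{itemize}

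The paper's route buys uniformity. It presents masking as one instance of the general score-plus-mismatch template, and that template is what carries over to the uniform and Gaussian channels. Your affine shortcut is special to erasure: for the uniform channel the branch posteriors depend on $s_i$ through the tilt ratio, so the CE is no longer affine in the noise parameter.

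One small caveat concerns your aside about posteriors that do not follow the factorization, where you say the affine decomposition holds ``either way.'' That only works if the model posterior on each branch is itself free of $t_i$, and a time-conditioned network need not be. This lies outside the proposition's setting, so it does not affect your proof.
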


\input{sections/qary_symmetric}
\input{sections/mismatched_immse}

%% file: sections/qary_symmetric.tex
\paragraph{Uniform channel.}
Let $q=|\mathcal{X}|$ and let $s\in[0,1]$ denote degradation progress. Define
$a(s)\triangleq 1-s\left(1-\frac{1}{q}\right)$ to be the probability that the output equals the input symbol. The uniform channel is defined by
\begin{align*}
W_s(y\mid x)
&=
\begin{cases}
a(s), & y=x,\\[0.3em]
\dfrac{s}{q}, & y\neq x.
\end{cases}
\end{align*}
At $s=0$ this channel recovers the clean symbol, while $s=1$ makes the output
uniform and therefore uninformative. For coordinate $i$, define the
correct-output and error-output CEs by
\begingroup
\small
\begin{align*}
\CE^{\mathrm{cor}}_i(\mathbf{s})
&\triangleq
\E\!\left[
-\log Q_{X_i\mid Y}(X_i\mid Y)\mid Y_i=X_i
\right],
&
\CE^{\mathrm{err}}_i(\mathbf{s})
&\triangleq
\E\!\left[
-\log Q_{X_i\mid Y}(X_i\mid Y)\mid Y_i\neq X_i
\right].
\end{align*}
\endgroup
\begin{proposition}[uniform-channel GEXIT derivative]
\label{thm:qary-gexit-derivative}
For the uniform channel parameterized by degradation progress $s_i$,
write $a_i=a(s_i)$. The local CE derivative is given by
\begingroup
\small
\begin{align}
\label{eq:qary-gexit-derivative}
\frac{\partial}{\partial s_i}
\CE\big(P_{X_i\mid Y},Q_{X_i\mid Y}\big)
&=
\left(1-\frac{1}{q}\right)\!\Bigg[
\CE^{\mathrm{err}}_i(\mathbf{s})-\CE^{\mathrm{cor}}_i(\mathbf{s})
+\frac{1}{1-a_i}\left(1-\frac{1}{a_i}
\E\!\left[
Q_{X_i\mid Y}(Y_i\mid Y)
\right]\right)
\Bigg]
.
\end{align}
\endgroup
\end{proposition}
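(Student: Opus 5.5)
The plan is to specialize Theorem~\ref{thm:gexit-derivative} to the uniform channel. Both terms of \eqref{eq:gexit-derivative} only require the channel score $S_{s}(x,y)$ from \eqref{eq:channel-score}, and for this channel the score takes just two values. I will work at a fixed $\mathbf s$ with $s_i\in(0,1)$, so that $a_i\in(1/q,1)$ and $1-a_i=s_i(1-\frac1q)>0$. I will take the hypotheses of Theorem~\ref{thm:gexit-derivative} (positivity of $Q_{X_i\mid Y}$, finite log loss, Convention (C2)) as given, as the proposition implicitly does. The channel is affine in $s$, with both $W_s$ values strictly positive in the interior, so Convention (C1) and differentiability are unproblematic there.

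\textbf{Step 1 (score).} Since $a'(s)=-(1-\frac1q)$ and $1/s_i=(1-\frac1q)/(1-a_i)$, the two values of the score are
\begin{align*}
S_{s_i}(x,y)=
\begin{cases}
-\dfrac{1-1/q}{a_i}, & y=x,\\[0.6em]
\dfrac{1-1/q}{1-a_i}, & y\neq x.
\end{cases}
\end{align*}

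\textbf{Step 2 (log-likelihood term).} I split $\E[-S_{s_i}(X_i,Y_i)\log Q_{X_i\mid Y}(X_i\mid Y)]$ on the events $\{Y_i=X_i\}$ and $\{Y_i\neq X_i\}$. Because the channel is symmetric, these events have probabilities $a_i$ and $1-a_i$ regardless of $P_X$. On each event the score is constant, and conditioning gives back exactly $\CE^{\mathrm{cor}}_i$ or $\CE^{\mathrm{err}}_i$. The prefactors $1/a_i$ and $1/(1-a_i)$ cancel against these probabilities. The term therefore equals $(1-\frac1q)\big(\CE^{\mathrm{err}}_i-\CE^{\mathrm{cor}}_i\big)$.

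\textbf{Step 3 (mismatch term).} Given $Y$, a draw $X'\sim Q_{X_i\mid Y}(\cdot\mid Y)$ equals $Y_i$ with probability $Q_{X_i\mid Y}(Y_i\mid Y)$, and differs from $Y_i$ otherwise. Hence
\begin{align*}
\E_{X'}[S_{s_i}(X',Y_i)\mid Y]
=\Big(1-\frac1q\Big)\Big[\frac{1}{1-a_i}-Q_{X_i\mid Y}(Y_i\mid Y)\Big(\frac1{a_i}+\frac1{1-a_i}\Big)\Big].
\end{align*}
Using $\frac1{a_i}+\frac1{1-a_i}=\frac{1}{a_i(1-a_i)}$, this becomes
\begin{align*}
\Big(1-\frac1q\Big)\frac{1}{1-a_i}\Big(1-\frac{1}{a_i}Q_{X_i\mid Y}(Y_i\mid Y)\Big).
\end{align*}
Taking the outer expectation over $Y$ and adding the result of Step 2 yields \eqref{eq:qary-gexit-derivative}.

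\textbf{Main obstacle.} The algebra itself is routine. The real care is needed in two places. First, the conditional expectations defining $\CE^{\mathrm{cor}}_i$ and $\CE^{\mathrm{err}}_i$ must be normalized by exactly $a_i$ and $1-a_i$. This requires that $P(Y_i=X_i)=a_i$ hold for every input distribution, which is true because $W_s(x\mid x)=a(s)$ for all $x$. Second, the final formula is only meaningful in the interior $s_i\in(0,1)$, where $1/(1-a_i)$ is finite. The boundary $s_i\to0$ is the clean endpoint, and I would not try to extend the formula there, since only interior points are needed for the integral in \eqref{eq:gexit-area-general}.
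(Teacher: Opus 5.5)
Your proof is correct and follows the paper's argument. You specialize Theorem~\ref{thm:gexit-derivative} with the two-valued channel score, split the log-likelihood term on $\{Y_i=X_i\}$ versus $\{Y_i\neq X_i\}$ (probabilities $a_i$ and $1-a_i$), and evaluate the mismatch term through $Q_{X_i\mid Y}(Y_i\mid Y)$; the only cosmetic difference is that the paper differentiates in the pass-through probability $a_i$ and applies $da_i/ds_i=-(1-1/q)$ at the end, whereas you differentiate directly in $s_i$ so the factor $(1-1/q)$ appears in the score from the start.
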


%% file: sections/mismatched_immse.tex
\paragraph{Gaussian diffusion.}
For Gaussian noise we use the native signal-to-noise ratio (SNR) parameter
$\rho>0$ and an injective embedding $f\colon\mathcal{X}\to\mathbb{R}^d$. For
$X=(X_1,\ldots,X_n)$, write $U_i=f(X_i)$. The Gaussian channel acts
coordinatewise in the embedding space as
\begin{align*}
Y_i(\rho)=U_i+\rho^{-1/2}Z_i,
\qquad
Z_i\sim\mathcal{N}(0,I_d),
\qquad i=1,\ldots,n
\end{align*}
and we write $Y(\rho)=\bigl(Y_1(\rho),\ldots,Y_n(\rho)\bigr)$
with kernel $W_{\rho}(y_i\mid x_i)$ on each coordinate. The injectivity
assumption ensures that the continuous representation preserves the discrete
symbol. The exact conservation law is unchanged by the choice of injective
embedding, but under finite-capacity models the embedding can simplify or
complicate the denoising problem.

As in the discrete cases, the model supplies the extrinsic component of the
marginal posterior and the known Gaussian likelihood is inserted analytically:
\begin{align*}
Q_{X_i\mid Y(\rho)}(x_i\mid y)
&\propto
Q_{X_i\mid Y_{-i}(\rho)}(x_i\mid y_{-i})\,W_\rho(y_i\mid x_i).
\end{align*}
Define the corresponding model marginal posterior mean
\begin{align*}
m_{Q,i}(y)
&\triangleq
\E_{X'\sim Q_{X_i\mid Y(\rho)}(\cdot\mid y)}
\!\left[f(X')\right].
\end{align*}
The mismatched mean-square error (MSE) on the full sequence is then
\begin{align*}
 \mathrm{MSE}_{P_{X\mid Y(\rho)},\tilde{Q}_{X\mid Y(\rho)}}(\rho)
&\triangleq
\E_P\!\left[
    \sum_{i=1}^n
\left\|
f(X_i)
 -
m_{Q,i}(Y(\rho))
\right\|^2
\right],
\end{align*}
where $\tilde{Q}_{X|Y(\rho)}(x|y) = \prod_{i=1}^n Q_{X_i|Y(\rho)}(x_i|y)$.
\begin{proposition}[Gaussian CE area formula]
\label{cor:gaussian-ce-derivative}
If the model posteriors are modeled using the Bayes factorization in
\eqref{eq:base-posterior-factorization}, then the CE area over the
native SNR axis is one half of the mismatched MSE in the embedding space:
\begin{align}
\label{eq:immse-eval}
\CE\!\big(P_X,Q_X\big)
&=
\frac{1}{2}\int_0^\infty
\mathrm{MSE}_{P_{X\mid Y(\rho)},\tilde{Q}_{X\mid Y(\rho)}}(\rho)\,d\rho .
\end{align}
\end{proposition}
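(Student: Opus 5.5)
Our approach is to specialize the mismatched conservation law \eqref{eq:gexit-area-general} to the uniform path $\gamma_i(\tau)=\rho(\tau)$. We then evaluate the local derivative of Theorem~\ref{thm:gexit-derivative} for the Gaussian score. The orientation is reversed: $\rho\to\infty$ is the clean endpoint and $\rho\to 0$ is the uninformative one. Reparametrizing $\tau\mapsto\rho$ therefore gives
\begin{align*}
\CE(P_X,Q_X)=\int_0^\infty \sum_{i=1}^n\Big(-\frac{\partial}{\partial \rho_i}\CE\big(P_{X_i\mid Y},Q_{X_i\mid Y}\big)\Big)\Big|_{\rho_j=\rho}\,d\rho .
\end{align*}
The endpoint limits must be checked. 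At $\rho\to\infty$ the Gaussian likelihood factor in \eqref{eq:base-posterior-factorization} concentrates $Q_{X_i\mid Y}$ on the true symbol, because $f$ is injective and $Q$ is positive. At $\rho\to 0$ the local factor $W_\rho$ becomes flat in $x_i$.

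The core computation is the Gaussian score. With $W_\rho(y\mid x)\propto \rho^{d/2}\exp(-\tfrac{\rho}{2}\|y-f(x)\|^2)$ we have
\begin{align*}
S_\rho(x,y)=\frac{d}{2\rho}-\frac12\|y-f(x)\|^2 .
\end{align*}
The $d/(2\rho)$ term is independent of $x$. It therefore cancels between the two terms of \eqref{eq:gexit-derivative}, since $\E[\log Q]$-weighting of a constant leaves $\tfrac{d}{2\rho}\E[-\log Q]$, matched against the constant inside the $X'$ expectation. This cancellation needs care, so we rewrite the first term as a covariance. Because the second term of \eqref{eq:gexit-derivative} only involves $Q$-averages of $S$, the constant-in-$x$ pieces should be organized via integration by parts in $y_i$.

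The cleaner route is to write $\CE(P_{X_i\mid Y},Q_{X_i\mid Y})$ explicitly using \eqref{eq:base-posterior-factorization}. In $\log Q_{X_i\mid Y}(x_i\mid y)$, only $\rho\langle y_i,f(x_i)\rangle-\tfrac{\rho}{2}\|f(x_i)\|^2$ minus the log-normalizer depends on $\rho_i$, once we expand $\|y_i-f(x_i)\|^2$ and drop $x$-independent terms. Substituting $Y_i=f(X_i)+\rho^{-1/2}Z_i$ and differentiating in $\rho_i$ (allowed under Convention (C2)), the derivative of the log-normalizer produces $m_{Q,i}$. Applying Stein's lemma $\E[Z_i g(Y_i)]=\rho^{-1/2}\E[\nabla_{y_i}g]$ to the terms containing $Z_i$, we expect the remaining terms to collapse to
\begin{align*}
-\frac{\partial}{\partial\rho_i}\CE\big(P_{X_i\mid Y},Q_{X_i\mid Y}\big)=\frac12\E\big\|f(X_i)-m_{Q,i}(Y)\big\|^2 .
\end{align*}
This is the mismatched I-MMSE identity in the style of Verd\'u's relative-entropy/mismatched-MMSE formula, applied per coordinate. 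Crucially, $Q_{X_i\mid Y_{-i}}$ does not depend on $\rho_i$, so the extrinsic factor contributes nothing to the derivative.

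Summing over $i$ yields $\tfrac12\mathrm{MSE}_{P,\tilde Q}(\rho)$, and integrating gives \eqref{eq:immse-eval}. We expect the main obstacle to be the Stein/integration-by-parts step. The cross terms $\E[\langle Z_i, f(X_i)-m_{Q,i}\rangle]$ must combine with the derivative of $m_{Q,i}$ so that exactly the squared error remains. This requires the Gaussian-tilted form of $Q$, which makes $\nabla_{y_i}\log\sum_u Q(u\mid y_{-i})W_\rho(y_i\mid u)$ an affine function of $m_{Q,i}$. We also need to justify differentiation under the expectation and the endpoint limits, which we will take from Conventions (C1)--(C2).
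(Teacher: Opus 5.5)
Your final (``cleaner'') route is correct, but it differs from the paper's. The paper does not take a pathwise derivative. It plugs the Gaussian score into Theorem~\ref{thm:gexit-derivative}, working in the rescaled model $R_i=\sqrt{\rho}\,U_i+Z_i$, where the score $\frac{1}{2\sqrt{\rho}}Z_i^\top U_i$ is linear in $Z_i$. It then evaluates the two terms separately by first-order Stein identities. The log-likelihood term is handled by integrating by parts against $\log Q_{X_i\mid R_\rho}$ itself, giving $-\tfrac12\E[\|U_i\|^2-U_i^\top m_{Q,i}]$. The correction term is handled by integrating by parts against $m_{Q,i}$ and using the divergence identity, giving $\tfrac12\E[U_i^\top m_{Q,i}-\|m_{Q,i}\|^2]$.

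You instead fix the law of $(X,Z_i,Y_{-i})$ and push $\rho_i$ into the integrand through $Y_i=f(X_i)+\rho_i^{-1/2}Z_i$. Carrying this out, the total $\rho_i$-derivative of $\log Q_{X_i\mid Y}(X_i\mid Y)$ is $\tfrac12\|f(X_i)\|^2-\langle f(X_i),m_{Q,i}\rangle+\tfrac12\E_{X'\sim Q_{X_i\mid Y}}\|f(X')\|^2+\tfrac12\rho^{-1/2}\langle Z_i,f(X_i)-m_{Q,i}\rangle$. The $\langle Z_i,f(X_i)\rangle$ part has mean zero. A single Stein step with $\nabla_{y_i}\cdot m_{Q,i}=\rho\,(\E_{X'\sim Q_{X_i\mid Y}}\|f(X')\|^2-\|m_{Q,i}\|^2)$ cancels the second moment and completes the square, so the collapse you anticipate does happen.

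The two routes trade off as follows. Yours needs only one integration by parts, against a bounded field, so it avoids the linear-growth tail control on $\log Q$ that the paper must supply for its log-likelihood term. The paper's route exhibits the proposition as a literal instance of the general mismatched GEXIT formula. Yours bypasses that theorem and relies on the channel being reparameterizable, so it has no counterpart for the masked or uniform channels. Also, the interchange of derivative and expectation in your route is not what (C2) provides: that envelope controls $\partial_s\{p\log q\}$, the likelihood-ratio derivative. You need a pathwise envelope instead, which is immediate here because the expression above is bounded by $C(1+\|Z_i\|)$ for $\rho$ in compact subintervals of $(0,\infty)$.

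Your first attempt contains a false step that should be deleted rather than patched. The constant $d/(2\rho)$ in $S_\rho(x,y)=\frac{d}{2\rho}-\frac12\|y-f(x)\|^2$ does not cancel between the two terms of \eqref{eq:gexit-derivative}. It contributes $\frac{d}{2\rho}\CE(P_{X_i\mid Y},Q_{X_i\mid Y})$ to the first term and $\frac{d}{2\rho}$ to the second, so the contributions add. In that parametrization, the constant in the first term is cancelled instead by the non-centered part of $\frac{1}{2\rho}\E[\|Z_i\|^2\log Q_{X_i\mid Y}(X_i\mid Y)]$, via the second-order identity $\E[\|Z\|^2h(Z)]=d\,\E[h(Z)]+\E[\Delta h(Z)]$. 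The constant in the second term is cancelled by the $d/\rho$ hidden in $\E\|Y_i-m_{Q,i}\|^2$. Avoiding exactly this bookkeeping is why the paper switches to $R_\rho=\sqrt{\rho}\,Y(\rho)$: the Gaussian normalizer then no longer depends on $\rho$, and the score has no constant term.
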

The mean in this formula is
the expectation under the channel-tilted categorical posterior. We give two
proofs: Appendix~\ref{subsec:gaussian-ce-proof} derives the identity from our
marginal GEXIT formula, while
Appendix~\ref{subsec:gaussian-ce-verdu-proof} gives a short derivation from
I--MMSE and Verd\'u's mismatched estimation formula.  In the matched case,
Proposition~\ref{cor:gaussian-ce-derivative} reduces to the Gaussian I--MMSE
identity \cite{Guo2005IMmse}; the connection between the two derivations is
discussed in Appendix~\ref{subsec:gaussian-ce-verdu-connection}.
The masked and uniform-channel specializations are proved in
Appendices~\ref{subsec:masked-exit-proof} and \ref{subsec:qary-gexit-proof}.

%% file: sections/implementation.tex
\section{Implementation}
\label{sec:implementation}

In this section, we replace $Q$ related quantities with a neural network approximations and denote the resulting learned model by $P^\theta$.

\subsection{Extrinsic-Only Posterior Modeling}
\label{subsec:extrinsic-posterior-modeling}
\paragraph{Channel-Aware Logits Tilting.}
For every admissible channel, the corruption kernel is known exactly. The model
therefore does not need to learn the full posterior from scratch. The
marginal posterior satisfies
\begin{align}
\label{eq:extrinsic-factorization}
P^\theta_{X_i\mid Y}(x_i\mid y)
&\propto
P^\theta_{X_i\mid Y_{-i}}(x_i\mid y_{-i})\,W_{t_i}(y_i\mid x_i).
\end{align}
Thus, we parameterize only the extrinsic prior term
$P^\theta_{X_i\mid Y_{-i}}(x_i\mid y_{-i})$
and use the known channel kernel to form the full marginal posterior. Concretely, the
network produces logits $\left\{\ell_i^\theta(x_i;y)\right\}_{x_i\in \mathcal X}$
and the channel kernel tilts those logits into the marginal
posterior:
\begin{align}
\label{eq:tilted-posterior}
P^\theta_{X_i\mid Y}(x_i\mid y)
&=
\mathrm{Softmax}\!\left(
\ell_i^\theta(x_i;y)+\log W_{t_i}(y_i\mid x_i)
\right)
,
\end{align}
where the softmax is taken over $x_i\in\mathcal X$.
Ideally, the network should compute the logits based only on $y_{-i}$ but, in practice, transformers cannot be restricted to ignore the local observation $y_i$ for all symbols at one forward pass. Hence, we use the full observation $y$ as input to the network and rely on the channel tilting to encourage the network to learn only the extrinsic prior.

\subsection{Channel-Shared Cross-Entropy Training}
\label{subsec:channel-shared-training}

The conservation laws depend on two ingredients: the known channel score and
the model marginal posteriors. Thus, training only needs to estimate the posterior term, which is learned by a CE objective for all admissible channels:
\begin{align}
\label{eq:shared-token-ce-loss}
\mathcal{L}_{\mathrm{CE}}(\theta)
&=
\int_0^1
\E_{X,Y(\gamma(\tau))}
\left[
-\frac{1}{n}\sum_{i=1}^n
\log P^\theta_{X_i\mid Y(\gamma(\tau))}
\bigl(X_i\mid Y(\gamma(\tau))\bigr)
\right]\,d\tau,
\end{align}
where $P^\theta_{X_i\mid Y}$ is the channel-tilted marginal posterior from
\eqref{eq:tilted-posterior} along the noising path $\gamma(\tau)$. The loss is
minimized when the model marginal posteriors match the data marginal posteriors
across noise levels. The exact CE identities assume that the learned marginals
are compatible with some joint law $Q_X$; in implementation we do not enforce
this global consistency, and use the resulting conservation-law area as a
likelihood-scale evaluation surrogate. The conservation law is then used to
evaluate the CE on training and validation sets, not as a separate training
objective.

\subsection{Avoiding Embedding Collapse in Gaussian Diffusion}
Our Gaussian implementation avoids training a regressor for the clean
embedding. The network instead predicts a categorical distribution, tilts it by
the Gaussian channel likelihood, and computes the embedding-space posterior mean
from that distribution only when evaluating the I-MMSE curve. This avoids
minimizing an embedding-space MSE in which both the target embedding and the
posterior-mean estimate depend on learned parameters. This design helps address
the embedding-collapse problem observed when applying Gaussian noise to
discrete data
\cite{Li2022DiffusionLM,Gao2024Difformer,Gulrajani2024LikelihoodDiffusionLM,Nguyen2025VQLCMD}.

%% file: sections/experiments.tex
\section{Experiments}
This section validates the conservation laws numerically. We start with synthetic Markov data, where the exact entropy rate and reference GEXIT curves can be computed analytically, and then evaluate the same methodology on text and images.

\subsection{Markov Data}
\begin{wrapfigure}{r}{0.46\linewidth}
\vspace{-0em}
\centering
\input{shared/figures/tikz/exp_markov_objective_estimates_d128}
\vspace{-1.8em}
\caption{\small Synthetic Markov data: estimated and true entropy as alphabet size increases.}
\label{fig:exp_markov_objective_estimates_d128}
\vspace{-0.75em}
\end{wrapfigure}
We begin with synthetic first-order Markov sources, where the entropy rate and
reference GEXIT curves are available analytically. This setting lets us compare
conservation-law area estimates directly against ground truth. Figure~\ref{fig:exp_markov_objective_estimates_d128}
compares the true entropy rate with estimates from a causal transformer and
from denoising transformers trained with masked, uniform, and
Gaussian noising paths, across alphabet sizes from 8 to 512.
All models use the same hyperparameters: hidden size 256, sequence length 256,
8 layers, and 8 attention heads. The autoregressive baseline uses causal
masking, while the denoising transformers use self-masking (excluding each
position's own input from its attention context) and
channel-aware logit-tilting construction in Section~\ref{subsec:extrinsic-posterior-modeling}
and the shared cross-entropy loss in Section~\ref{subsec:channel-shared-training}.
Additional implementation details and experiments are deferred to Appendix~\ref{app:markov}.

The causal transformer and the GEXIT estimates obtained with masked and uniform
noise remain close to the true entropy rate across alphabet sizes. In contrast,
the Gaussian-noise estimate lies above the true entropy rate, with a gap that
increases alphabet size. Thus, although the conservation
law does not depend on the channel, the model's finite-capacity approximation error does.

\subsection{Language Modeling on \texttt{text8}}
\begin{wraptable}{r}{0.48\linewidth}
\vspace{-2em}
\centering
\scriptsize
\setlength{\tabcolsep}{3pt}
\caption{Diffusion-model results on \texttt{text8}, reported in bits per character (BPC).}
\vspace{0.5em}
\label{tab:text8_diffusion_summary}
\begin{tabular}{lcc}
\toprule
Model & External data & BPC ($\downarrow$) \\
\midrule
masked (ours) & No & 1.424 \\
uniform (ours) & No & 1.513 \\
Gaussian (ours) & No & 2.087 \\
\midrule
D3PM mask/absorbing \cite{Austin2021D3PM} & No & $\leq 1.45 \pm 0.02$ \\
SEDD Absorb \cite{Lou2023DiscreteDiffusionLM} & No & $\leq 1.39$ \\
MDLM \cite{Sahoo2024MDLM} & Yes & $\leq 1.40$ \\
MD4 \cite{Shi2024MD4} & Yes & $\leq 1.37$ \\
EDLM \cite{Xu2025EDLM} & Yes & $\leq 1.24$ \\
\bottomrule
\end{tabular}
\vspace{-1.2em}
\end{wraptable}
We next turn to the character-level \texttt{text8} benchmark \cite{Mahoney2011TextData}. We use the raw character stream without preprocessing and train standard diffusion transformers with context length 256, hidden size 768, 12 layers, and 12 attention heads. The diffusion models use adaptive layer normalization with zero initialization (adaLN-Zero) timestep conditioning \cite{PeeblesXie2023DiT}, and we follow the D3PM training setup and hyperparameter choices \cite{Austin2021D3PM}. As in the Markov experiments, all objectives are trained with the same channel-shared cross-entropy loss from Section~\ref{subsec:channel-shared-training}, and likelihood is evaluated through the corresponding conservation law. Table~\ref{tab:text8_diffusion_summary} summarizes the resulting diffusion-model bits-per-character (BPC) values; details are deferred to Appendix~\ref{app:lm-repro}.
The baseline acronyms are score entropy discrete diffusion (SEDD), masked diffusion language model (MDLM), masked discrete diffusion model (MD4), and energy-based diffusion language model (EDLM).

Figure~\ref{fig:exp_text8_eval_curves} shows the GEXIT curves used to compute these likelihood estimates. The left and middle panels keep each objective on its native evaluation axis: masked and uniform are plotted against their discrete-channel noise levels, while Gaussian is plotted against SNR $\rho$. The right panel puts the three objectives on a common uncertainty scale, $H(X\mid Y)$, where $X$ is taken to be uniform over the 256 byte symbols. For example, an erasure rate of $0.1$ gives $H(X\mid Y)=0.1\log 256=0.555$ nats/token; for the uniform channel, a substitution probability of $0.1$ gives $H(X\mid Y)=h_2(0.9)+0.1\log 255=0.879$ nats/token. For the Gaussian curve, the uncertainty is estimated for the Gaussian channel induced by the learned embedding map $f_\theta$. This yields an estimate of $H(X\mid Y)\approx 0.009$ nats/token for SNR $\rho=0.1$. This allows us to plot all curves with a common information-based axis.

\begin{figure}[!t]
\centering
\input{shared/figures/tikz/exp_text8_eval_curves}
\caption{\texttt{text8} evaluation curves for masked, uniform, and Gaussian noise. The left and middle panels show the native evaluation axes, and the right panel reparametrizes the corresponding cumulative areas by the common uncertainty scale.}
\label{fig:exp_text8_eval_curves}
\end{figure}

\subsection{CIFAR-10}
For CIFAR-10 \cite{Krizhevsky2009CIFAR10}, we train masked, uniform, and Gaussian diffusion models on $32\times32$ RGB images represented as 8-bit subpixel tokens. All three models use the same DDPM U-Net implementation from \cite{Ho2020DDPM}, and samples are generated with 1000 reverse-diffusion steps. The masked and uniform models are sampled with the corresponding D3PM categorical reverse-posterior samplers \cite{Austin2021D3PM}, whereas the Gaussian model is sampled with a score-based stochastic differential equation (score-SDE) reverse process \cite{Song2021SDE}. Architecture and optimization details are deferred to Appendix~\ref{app:cifar-repro}. Table~\ref{tab:cifar_diffusion_bpd} compares conservation-law likelihood estimates with published likelihood baselines, where maximum-likelihood Score SDE is abbreviated as ML Score SDE. Table~\ref{tab:cifar_exit_generation_scores} reports Fr\'echet Inception Distance (FID) \cite{Heusel2017FID} and Inception Score (IS) \cite{Salimans2016ImprovedGAN} for the corresponding samplers alongside D3PM \cite{Austin2021D3PM}, DDPM \cite{Ho2020DDPM}, and Score SDE \cite{Song2021SDE}. Figure~\ref{fig:cifar_exit_generation_traces} shows representative generations and reverse-process traces for the same three noising paths, complementing the FID and IS results in Table~\ref{tab:cifar_exit_generation_scores}.
For sampling schedules, we evaluate linear, cosine, and equal-information schedules, following the schedule families used in D3PM \cite{Austin2021D3PM}. The equal-information schedule uses the conservation-law curve to partition the noise-parameter range into equal-area intervals; details are given in Appendix~\ref{app:cifar-schedules}.

The image results highlight the dependence of finite-model performance on the channel family. Across modalities, no single noising path dominates: masked noise gives the best \texttt{text8} likelihood in Table~\ref{tab:text8_diffusion_summary}, whereas Gaussian noise gives the best CIFAR-10 likelihood and sample-quality scores among our trained models. The uniform channel performs worse than the masked and Gaussian channels on CIFAR-10 under the reported metrics, despite using the same model class and training protocol. 
\begin{table}[t]
\centering
\caption{CIFAR-10 sample quality and likelihood. Left: FID and Inception Score (IS); lower FID and higher IS are better. Right: likelihood in bits per dimension (bpd); lower is better.}
\label{tab:cifar_exit_generation_scores}
\label{tab:cifar_diffusion_bpd}
\vspace{-2mm}
\begin{minipage}[t]{0.58\linewidth}
\centering
\scriptsize
\setlength{\tabcolsep}{2pt}
\textbf{Sample Quality}
\vspace{0.25em}

\begin{tabular}{@{}lccc@{}}
\toprule
Noise & Schedule & FID $\downarrow$ & IS $\uparrow$ \\
\midrule
uniform (ours) & cosine & $59.51$ & $6.21\pm0.06$ \\
masked (ours) & equal-info & $30.69$ & $6.87\pm0.08$ \\
Gaussian (ours) & log-uniform & $26.66$ & $7.34\pm0.08$ \\
\midrule
D3PM uniform \cite{Austin2021D3PM} & cosine & $51.27\pm2.15$ & $5.99\pm0.14$ \\
D3PM absorbing \cite{Austin2021D3PM} & mutual-info & $41.28\pm0.65$ & $6.26\pm0.10$ \\
D3PM Gauss + logistic \cite{Austin2021D3PM} & linear & $7.34\pm0.19$ & $8.56\pm0.10$ \\
\midrule
DDPM \cite{Ho2020DDPM} & linear-$\beta$ & 3.17 & 9.46 \\ 
Score SDE \cite{Song2021SDE} & geometric-$\sigma$ & 2.20 & 9.89 \\
\bottomrule
\end{tabular}
\end{minipage}
\begin{minipage}[t]{0.4\linewidth}
\centering
\scriptsize
\setlength{\tabcolsep}{10pt}
\textbf{Likelihood}
\vspace{0.25em}

\begin{tabular}{@{}lc@{}}
\toprule
Model & bpd $\downarrow$ \\
\midrule
DDPM \cite{Ho2020DDPM} & $\le 3.70$ \\
Improved DDPM \cite{NicholDhariwal2021ImprovedDDPM} & $\le 3.40$ \\
Score SDE \cite{Song2021SDE} & $\le 2.99$ \\
ML Score SDE \cite{Song2021MLScore} & $\le 2.99$ \\
\midrule
D3PM Gauss + logistic \cite{Austin2021D3PM} & $\le 3.435\pm0.007$ \\
\midrule
uniform  & 3.66 \\
masked   & 3.31 \\
Gaussian & 3.24 \\
\bottomrule
\end{tabular}
\end{minipage}
\end{table}

\begin{figure}[t]
\centering
\includegraphics[width=\linewidth]{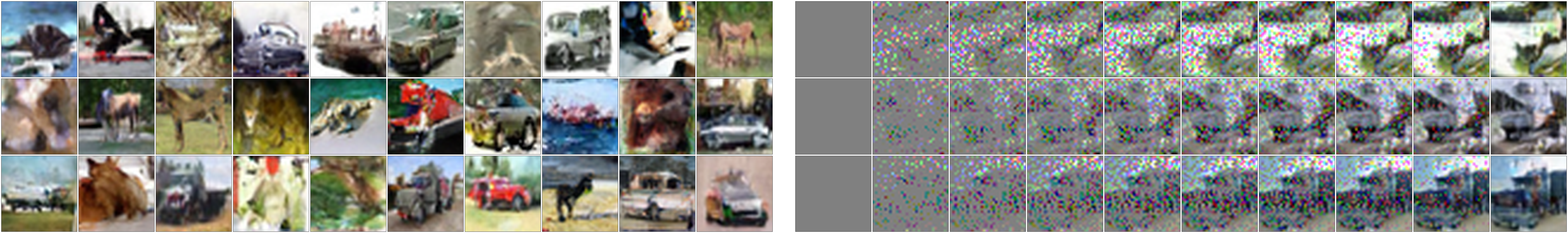}\\[0.35em]
\includegraphics[width=\linewidth]{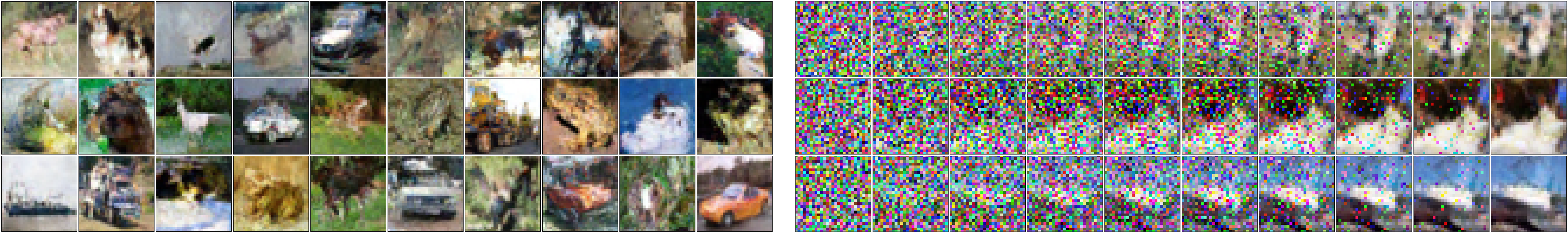}\\[0.35em]
\includegraphics[width=\linewidth]{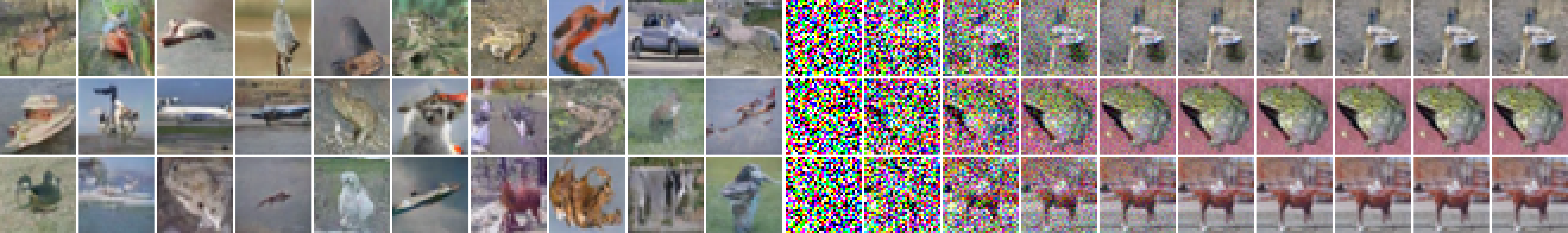}
\caption{CIFAR-10 generations and diffusion traces for three noising paths. Rows show, from top to bottom, masked, uniform \cite{Austin2021D3PM}, and Gaussian noise \cite{Ho2020DDPM,Song2021SDE}.}
\label{fig:cifar_exit_generation_traces}
\end{figure}

%% file: shared/figures/tikz/exp_markov_objective_estimates_d128.tex
\begin{tikzpicture}
\begin{axis}[
width=\linewidth,
height=0.70\linewidth,
xmode=log,
log basis x=2,
xlabel={Alphabet size},
ylabel={Estimate (nats/symbol)},
xtick={8,16,32,64,128,256,512},
xticklabels={8,16,32,64,128,256,512},
tick label style={font=\footnotesize},
label style={font=\small},
grid=both,
major grid style={draw=gray!25},
minor grid style={draw=gray!15},
legend columns=3,
legend style={draw=none, fill=none, at={(0.5,1.03)}, anchor=south, font=\scriptsize},
]
\addplot+[line width=1.1pt, dashed, color=black]
table [x=K, y=entropy_nats, col sep=comma] {shared/figures/data/markov_synthetic/objective_estimates_d128.csv};
\addlegendentry{True entropy rate}

\addplot+[line width=1.1pt, mark=*, mark size=1.8pt, color=black]
table [x=K, y=causal_ce_nats, col sep=comma] {shared/figures/data/markov_synthetic/objective_estimates_d128.csv};
\addlegendentry{Causal}

\addplot+[line width=1.1pt, mark=square*, mark size=1.8pt, color=blue!70!black]
table [x=K, y=exit_area_nats, col sep=comma] {shared/figures/data/markov_synthetic/objective_estimates_d128.csv};
\addlegendentry{Masked}
\addplot+[line width=1.1pt, mark=diamond*, mark size=1.8pt, color=green!70!black]
table [x=K, y=qsc, col sep=comma] {shared/figures/data/markov_synthetic/objective_estimates_d128.csv};
\addlegendentry{Uniform}

\addplot+[line width=1.1pt, mark=triangle*, mark size=2.1pt, color=red!75!black]
table [x=K, y=immse_area_nats, col sep=comma] {shared/figures/data/markov_synthetic/objective_estimates_d128.csv};
\addlegendentry{Gaussian}
\end{axis}
\end{tikzpicture}

%% file: shared/figures/tikz/exp_text8_eval_curves.tex
\begin{tikzpicture}
\begin{groupplot}[
group style={group size=3 by 1, horizontal sep=1.5cm},
width=0.325\linewidth,
height=0.325\linewidth,
grid=both,
major grid style={draw=gray!25},
minor grid style={draw=gray!15},
tick label style={font=\footnotesize},
label style={font=\footnotesize},
title style={font=\footnotesize},
legend style={draw=none, fill=none, font=\footnotesize},
every axis plot/.append style={mark size=0.65pt},
]
\nextgroupplot[
xlabel={Noise level},
ylabel={CE derivative},
xmin=0,
xmax=1,
legend cell align=left,
legend pos=north west,
]
\addplot+[line width=1.15pt, color=teal!70!black]
coordinates {
(0.0,0.11152732849963903)
(0.03125,0.12837923599643591)
(0.0625,0.14248955524575876)
(0.09375,0.15768724195731748)
(0.125,0.17885554164446157)
(0.15625,0.20024030093049805)
(0.1875,0.22475466930147942)
(0.21875,0.2546758964804778)
(0.25,0.28717031478185207)
(0.28125,0.3259396747014216)
(0.3125,0.3699368216856995)
(0.34375,0.41994202675409587)
(0.375,0.4780528799656533)
(0.40625,0.5415870199516619)
(0.4375,0.618205964648185)
(0.46875,0.7031838576458531)
(0.5,0.7967666167684975)
(0.53125,0.9024343200648903)
(0.5625,1.020219123883393)
(0.59375,1.1487265468743415)
(0.625,1.2855991713126425)
(0.65625,1.43240066994896)
(0.6875,1.5858512316701663)
(0.71875,1.7459250053187954)
(0.75,1.9072242416816243)
(0.78125,2.065939550443493)
(0.8125,2.2209788567237694)
(0.84375,2.3667860029575998)
(0.875,2.5016593521216914)
(0.90625,2.62245732696128)
(0.9375,2.7224069212203066)
(0.96875,2.8000105634282337)
(1.0,2.85784863319688)
};
\addlegendentry{Masked}

\addplot+[line width=1.15pt, color=blue!70!black]
coordinates {
(0.99609375,0.0028356025878355506)
(0.9649689453125,1.7904876769677394)
(0.933844140625,2.142566454774072)
(0.9027193359375,2.286297775137448)
(0.87159453125,2.3368402162404873)
(0.8404697265625,2.332211515601462)
(0.809344921875,2.2869062562725624)
(0.7782201171875,2.2102104653353822)
(0.7470953125,2.1129549576488618)
(0.7159705078125,1.9950023178267653)
(0.684845703125,1.8658886142645055)
(0.6537208984375,1.7282731581019364)
(0.62259609375,1.5890081282798378)
(0.5914712890625,1.4466903736508934)
(0.560346484375,1.3145553844109434)
(0.5292216796875,1.1834431030259087)
(0.498096875,1.06028806598379)
(0.4669720703125,0.9489198805412681)
(0.435847265625,0.8451966801842634)
(0.4047224609375,0.7518491675166835)
(0.37359765625,0.6665897233731183)
(0.3424728515625,0.5931119527086132)
(0.311348046875,0.5255321651725483)
(0.2802232421875,0.46566913428139733)
(0.2490984375,0.41507203449890206)
(0.2179736328125,0.3693304159030768)
(0.186848828125,0.32916231714005806)
(0.1557240234375,0.29471087773478377)
(0.12459921875,0.2645760535442971)
(0.0934744140625,0.23708871488136948)
(0.062349609375,0.21630774161703586)
(0.0312248046875,0.20184421291387744)
(0.0001,0.3000106683556084)
};
\addlegendentry{Uniform}

\nextgroupplot[
xlabel={SNR $\rho$},
ylabel={Half-MSE},
ymin=0,
]
\addplot+[line width=1.15pt, color=red!75!black]
coordinates {
(1e-06,1781.6130887225356)
(1.778279410038923e-06,1752.4821093459996)
(3.162277660168379e-06,1741.6188859092397)
(5.623413251903491e-06,1736.7942835605795)
(1e-05,1733.2332273103095)
(1.778279410038923e-05,1727.989697886883)
(3.1622776601683795e-05,1718.1617405827794)
(5.623413251903491e-05,1699.7857754829263)
(0.0001,1663.491081704596)
(0.00017782794100389227,1583.9406300963153)
(0.00031622776601683794,1411.8935649501313)
(0.0005623413251903491,1087.3359872037613)
(0.001,621.7126208137339)
(0.0017782794100389228,223.16069438013025)
(0.0031622776601683794,49.68580377526098)
(0.005623413251903491,6.687362392017395)
(0.01,0.43363049770005135)
};
\addlegendentry{Gaussian}

\nextgroupplot[
xlabel={$H(X\mid Y)$},
ylabel={Cumulative area},
xmin=0,
xmax=5.6,
ymin=0,
legend cell align=left,
legend pos=north west,
]
\addplot+[line width=1.15pt, color=teal!70!black]
coordinates {
(0,0)
(0.173286795139986,0.00374854007025117)
(0.346573590279973,0.00798086493341046)
(0.519860385419959,0.0126711273897085)
(0.693147180559945,0.0179296083834863)
(0.866433975699932,0.0238529809237201)
(1.03972077083992,0.0304935273335947)
(1.2130075659799,0.0379846299239378)
(1.38629436111989,0.0464509769749117)
(1.55958115625988,0.0560308205605879)
(1.73286795139986,0.0669038908166366)
(1.90615474653985,0.0792457478235084)
(2.07944154167984,0.0932769182410045)
(2.25272833681982,0.109208791677213)
(2.42601513195981,0.127330557061585)
(2.59930192709979,0.14797727303493)
(2.77258872223978,0.171413999197654)
(2.94587551737977,0.197964013835675)
(3.11916231251975,0.228005473897367)
(3.29244910765974,0.261895250002957)
(3.46573590279973,0.299931589349629)
(3.63902269793971,0.342400336869341)
(3.8123094930797,0.38956052283214)
(3.98559628821969,0.441619526535093)
(4.15888308335967,0.498699983519474)
(4.33216987849966,0.560780667771429)
(4.50545667363964,0.627763767883418)
(4.67874346877963,0.699447593815939)
(4.85203026391962,0.775517052489053)
(5.0253170590596,0.855581375599724)
(5.19860385419959,0.939094879477562)
(5.37189064933958,1.0253826526752)
(5.54517744447956,1.11378670262246)
};
\addlegendentry{Masked}

\addplot+[line width=1.15pt, color=blue!70!black]
coordinates {
(0.0472050207676865,0)
(0.345933772621155,0.0279084184088571)
(0.61016692721761,0.0891161892467715)
(0.858124361248868,0.158039956318497)
(1.09487283160019,0.229987089830997)
(1.32288943267953,0.302648751445704)
(1.54364147577043,0.374533320684774)
(1.75809226600706,0.444519260493242)
(1.96691535371984,0.511798100204311)
(2.17060113607095,0.575727784136198)
(2.36951581969422,0.635812522226738)
(2.56393653999019,0.69174631381683)
(2.75407340501162,0.743371179882896)
(2.94008382553245,0.790613941364261)
(3.12208199637683,0.833585558821193)
(3.30014513809172,0.872460416336764)
(3.47431742653652,0.907378263540105)
(3.64461213824611,0.938646365996188)
(3.81101228083453,0.966567129765784)
(3.9734697849705,0.991420999809121)
(4.13190315961648,1.01349531652917)
(4.28619331432637,1.03309930084545)
(4.43617698143203,1.05050809068739)
(4.58163674625798,1.06593356411314)
(4.72228596201387,1.0796400125424)
(4.85774546561519,1.09184719907498)
(4.98750628021958,1.10271742402078)
(5.11086650775813,1.11242638969091)
(5.22681597853607,1.12113023794107)
(5.3338006722908,1.128937346909)
(5.4291521278721,1.13599328498625)
(5.50721294082566,1.14250073394349)
(5.54173044784094,1.15031080152398)
};
\addlegendentry{Uniform}

\addplot+[line width=1.15pt, color=red!75!black]
coordinates {
(-0,0)
(1.10208487569372e-36,0.00137525676288708)
(5.18883218326797e-17,0.00379317159449151)
(3.35297700075898e-05,0.00807359482156266)
(0.00891987420618534,0.0156670330313666)
(0.226188838481903,0.0291360257125449)
(1.12737274169922,0.0529833635151605)
(2.39240670204163,0.0950435249275201)
(3.4347517490387,0.168641889544253)
(4.18077325820923,0.295012351384358)
(4.6965765953064,0.502323815565472)
(5.03623294830322,0.809870955701579)
(5.25305700302124,1.18386093018669)
(5.37933206558228,1.51263468284004)
(5.45152616500854,1.70144422084055)
(5.49254894256592,1.77081522367709)
(5.51554107666016,1.7863980452343)
(5.52851390838623,1.7881381258541)
(5.53580474853516,1.78823491556043)
(5.53990602493286,1.78824140193336)
(5.54221391677856,1.78824448918348)
(5.54350996017456,1.78824997917681)
(5.54424095153809,1.78825974191892)
(5.54465103149414,1.78827710280218)
(5.54488134384155,1.78830797530344)
};
\addlegendentry{Gaussian}
\end{groupplot}
\end{tikzpicture}

%% file: sections/limitations.tex
\section{Limitations}
\label{sec:limitations}
Our theoretical results assume memoryless degraded channel families and model posteriors that factor through the known corruption channel.
The marginal-posterior locality in our conservation laws is also infinitesimal: while the derivatives depend only on coordinate-wise posterior marginals, finite-step reverse sampling can depend on joint posterior structure, so marginals alone need not determine an exact finite-step sampler.
The empirical scope is also limited. We evaluate synthetic Markov sources, byte-level \texttt{text8}, and CIFAR-10. Due to computational constraints, we did not extend to text corpora with large vocabularies or larger image datasets.
The large-scale neural experiments use one training seed per objective, so the reported sample-quality error bars capture fixed-checkpoint evaluation variability rather than variation across independent retraining runs.
The Gaussian formulation also computes conditional means over the discrete support induced by the data representation. This is adequate for the alphabets considered here, but it may become computationally expensive for very large vocabularies or structured discrete spaces without additional approximation.

%% file: sections/related_work.tex
\section{Related Work}
Recent information-theoretic analyses of diffusion models establish conservation-law identities for specific noise processes. Information-Theoretic Diffusion studies Gaussian diffusion \cite{Kong2023InfoDiffusion}, and Information-Theoretic Discrete Diffusion develops analogous identities for discrete diffusion, including an exact likelihood formula for the masked case \cite{Jeon2025InfoDiscreteDiffusion}. The information-theoretic discrete Poisson diffusion model (ItDPDM) gives a Poisson-channel counterpart \cite{Bhattacharya2025ItDPDM}. Complementary to these conservation-law analyses, \cite{ReevesPfister2025InfoProofsDiffusionSampling} gives a direct discrete-time information-theoretic proof for Gaussian diffusion sampling, bounding path-space KL through step size and conditional-mean estimation error. Our work differs from these approaches by formulating a single channel-path view that covers a broad class of channels. We also analyze the mismatched setting, where the denoiser is learned and need not equal the true posterior, yielding a cross-entropy conservation law. In addition, our derivation shows that the infinite-step diffusion limit can be expressed using marginal posteriors. Prior empirical work on D3PMs also observed that the choice of noise process affects diffusion performance \cite{Austin2021D3PM}. We extend this observation through the GEXIT formula and the associated conservation-law curves.

%% file: sections/conclusion.tex
\section{Conclusion}
We developed a unified mismatched GEXIT framework for diffusion likelihoods under memoryless degraded channel paths. The framework places masked, uniform, and Gaussian corruptions on a common cross-entropy scale and expresses the relevant derivative through a marginal posterior that combines the learned extrinsic prior with the known channel evidence. This perspective separates the conservation law, which is path-invariant for the exact posterior, from the finite-model approximation problem, where the choice of corruption channel and schedule can substantially affect performance.

Empirically, the channel choice produced different likelihood and generation tradeoffs across discrete text and image data. The results show that no single noising family is uniformly dominant under finite-capacity denoisers: masked corruption performs best for \texttt{text8}, whereas Gaussian corruption gives the strongest CIFAR-10 results among our trained models. These observations support treating the forward channel and its path as design variables rather than fixed implementation details. Future work can use the same conservation-law constraint to optimize nonuniform and modality-dependent schedules, including paths that allocate information unevenly across symbols, spatial locations, or noise levels.


%% file: sections/appendix_proofs.tex
\paragraph{Smoothness conventions.}
All derivative identities in this appendix are local in an interior noise
parameter.  For a channel family $W_t$, fix
$t_0\in\operatorname{int}(\mathcal{T})$ and a neighborhood
$I\subset\operatorname{int}(\mathcal{T})$ of $t_0$.  We write
$W_t(y\mid x)$ for the density or mass function with respect to the
counting measure in the discrete case and the Lebesgue measure in the
continuous case.  We use the following conventions.
\begin{enumerate}
\item[(C1)] \textbf{Channel regularity.}
For every $x$, the map $t\mapsto W_t(y\mid x)$ is absolutely continuous
on $I$ for almost every $y$, $\partial_t W_t(y\mid x)$ exists, and there
is an integrable envelope $M_x(y)$ such that
\begin{align*}
\sup_{t\in I}\left|\partial_t W_t(y\mid x)\right|\le M_x(y),
\qquad
\int \partial_t W_t(y\mid x)\,dy=0 .
\end{align*}
The score is $S_t(x,y)=\partial_t W_t(y\mid x)/W_t(y\mid x)$ on the
support of $W_t(\cdot\mid x)$.

\item[(C2)] \textbf{Mismatched CE domination.}
For the mismatched CE derivative, let
$p_i^{s,\mathbf t_{-i}}(x_i,y)$ be the density or mass function of
$(X_i,Y)$ under $P_X$ when the local coordinate is $s$ and the other
coordinates are fixed at $\mathbf t_{-i}$, and let
$q_i^{s,\mathbf t_{-i}}(x_i\mid y)$ be the corresponding $Q$-posterior.
We require $q_i^{s,\mathbf t_{-i}}>0$ on the $P$-support and an
integrable envelope $B_i(x_i,y)$ such that, for $s\in I$,
\begin{align*}
\left|
\partial_s\!\left\{
p_i^{s,\mathbf t_{-i}}(x_i,y)
\log q_i^{s,\mathbf t_{-i}}(x_i\mid y)
\right\}
\right|
\le B_i(x_i,y).
\end{align*}
This condition implies absolute continuity of the local CE on $I$ and
justifies moving the derivative through the expectation by dominated
convergence.
We assume this also applies in the matched case $Q_X = P_X$, where the integrand reduces to the posterior entropy loss $-\log P_{X_i\mid Y}(X_i\mid Y)$.

\item[(C3)] \textbf{Endpoint limits.}
For each area theorem path $\gamma\colon(0,1)\to
\operatorname{int}(\mathcal T)^n$, we assume that the one-sided limits of
the relevant potential along $\gamma$ exist as $\tau\downarrow 0$ and
$\tau\uparrow 1$ and equal the displayed clean and uninformative endpoint
values.  When an endpoint lies on the boundary of $\mathcal T$, expressions
such as $F(\gamma(0))$ and $F(\gamma(1))$ denote these one-sided limits.
\end{enumerate}

\subsection{Proof of Lemma~\ref{lem:gexit-locality}}
\label{subsec:gexit-locality-proof}
\begin{proof}
Fix $i$ and write $Y\equiv Y(\mathbf{t})$ and $Y_{-i}\equiv Y_{-i}(\mathbf{t}_{-i})$. By the chain rule for conditional entropy,
\begin{align*}
\frac{\partial}{\partial t_i} H(X\mid Y)
&=
\frac{\partial}{\partial t_i}
\Big(
H(X_i\mid Y)+H(X_{-i}\mid X_i,Y)
\Big).
\end{align*}
Using the memoryless channel factorization, $Y_i$ is conditionally independent of $(X_{-i},Y_{-i})$ given $X_i$, so
\begin{align*}
H(X_{-i}\mid X_i, Y)
&=
H(X_{-i}\mid X_i, Y_{-i}),
\end{align*}
which does not depend on $t_i$.
Hence,
\begin{align*}
\frac{\partial}{\partial t_i} H(X\mid Y)
&=
\frac{\partial}{\partial t_i} H(X_i\mid Y).
\end{align*}
\end{proof}

\subsection{Proof of the score-form representation}
\label{subsec:gexit-score-proof}
\begin{proof}
Write the conditional entropy directly in posterior form:
\begin{align*}
H(X_i\mid Y)
&=
-\E\!\left[\log P_{X_i\mid Y}(X_i\mid Y)\right]
\end{align*}
Using (C2) in the matched case $Q_X = P_X$, we compute the derivative
\begin{align*}
\frac{\partial}{\partial t_i} H(X_i\mid Y)
&=
-\E\!\left[
S_{t_i}(X_i,Y_i)\log P_{X_i\mid Y}(X_i\mid Y)
\right]\\
&\quad-
\E\!\left[
\frac{\partial}{\partial t_i}
\log P_{X_i\mid Y}(X_i\mid Y)
\right].
\end{align*}
Using the posterior factorization,
\begin{align*}
\log P_{X_i\mid Y}(x_i\mid y)
&=
\log P_{X_i\mid Y_{-i}}(x_i\mid y_{-i})
+\log W_{t_i}(y_i\mid x_i)
-\log Z_i(y),
\end{align*}
where
\begin{align*}
Z_i(y)\triangleq
\sum_{u\in\mathcal{X}}
P_{X_i\mid Y_{-i}}(u\mid y_{-i})W_{t_i}(y_i\mid u).
\end{align*}
The first two terms are explicit, so the only nontrivial derivative comes from the normalization. Differentiating that normalization term yields
\begin{align*}
\frac{\partial}{\partial t_i}\log Z_i(y)
&=
\E_{X'\sim P_{X_i\mid Y}}
\!\left[S_{t_i}(X',y_i)\mid Y=y\right].
\end{align*}
Substituting this back gives
\begin{align*}
\frac{\partial}{\partial t_i}
\log P_{X_i\mid Y}(x_i\mid y)
&=
S_{t_i}(x_i,y_i)
-\E_{X'\sim P_{X_i\mid Y}}
\!\left[S_{t_i}(X',y_i)\mid Y=y\right].
\end{align*}
Now substitute $x_i=X_i$ and $y=Y$. The remaining expectation vanishes by the tower property, since the conditional expectation of the score term given $Y$ matches the inner term. Hence
\begin{align*}
\frac{\partial}{\partial t_i} H(X\mid Y)
&=
-\E\!\Big[
S_{t_i}(X_i,Y_i)\log P_{X_i\mid Y}(X_i\mid Y)
\Big].
\qedhere
\end{align*}
\end{proof}
\subsection{Proof of Theorem~\ref{thm:gexit-derivative}}
\label{subsec:gexit-derivative-proof}
\begin{proof}
Write
\begin{align*}
\CE(P_{X_i\mid Y},Q_{X_i\mid Y})
&=
-\E\!\left[\log Q_{X_i\mid Y}(X_i\mid Y)\right].
\end{align*}
By (C2), we may differentiate the CE expectation
with respect to the interior coordinate $t_i$, which gives
\begin{align*}
\frac{\partial}{\partial t_i}
\CE(P_{X_i\mid Y},Q_{X_i\mid Y})
&=
-\E\!\left[
S_{t_i}(X_i,Y_i)\log Q_{X_i\mid Y}(X_i\mid Y)
\right]\\
&\quad-
\E\!\left[
\frac{\partial}{\partial t_i}
\log Q_{X_i\mid Y}(X_i\mid Y)
\right],
\end{align*}
where we used the memoryless channel factorization to write
\begin{align*}
\frac{\partial}{\partial t_i} P_{X_i,Y}(x_i,y)
&=
P_{X_i,Y}(x_i,y)\,S_{t_i}(x_i,y_i).
\end{align*}
Since $Q_{X_i\mid Y_{-i}(\mathbf t_{-i})}$ depends on the noise vector only
through $\mathbf t_{-i}$, its $t_i$-derivative is zero.
Thus, we have
\begin{align*}
\log Q_{X_i\mid Y}(x_i\mid y)
&=
\log Q_{X_i\mid Y_{-i}}(x_i\mid y_{-i})
+\log W_{t_i}(y_i\mid x_i)\\
&\quad-
\log \Bigg(
\sum_{u\in\mathcal{X}}
Q_{X_i\mid Y_{-i}}(u\mid y_{-i})W_{t_i}(y_i\mid u)
\Bigg).
\end{align*}
Differentiating the last equation yields
\begin{align*}
\frac{\partial}{\partial t_i}
\log Q_{X_i\mid Y}(x_i\mid y)
&=
S_{t_i}(x_i,y_i)
-
\E_{X'\sim Q_{X_i\mid Y}}
\!\left[S_{t_i}(X',y_i)\mid Y=y\right].
\end{align*}
Substituting back gives
\begin{align*}
\frac{\partial}{\partial t_i}
\CE(P_{X_i\mid Y},Q_{X_i\mid Y})
&=
\E \Big[
-S_{t_i}(X_i,Y_i)\log Q_{X_i\mid Y}(X_i\mid Y)
\Big]\\
&\quad-
\E\!\left[S_{t_i}(X_i,Y_i)\right]\\
&\quad+
\E \Big[
\E_{X'\sim Q_{X_i\mid Y}}
\!\left[S_{t_i}(X',Y_i)\mid Y\right]
\Big].
\end{align*}
The middle term vanishes by the zero-mass derivative in (C1), because
\begin{align*}
\E\!\left[S_{t_i}(X_i,Y_i)\right]
&=
\sum_{x_i} P_{X_i}(x_i)
\int \frac{\partial}{\partial t_i} W_{t_i}(y_i\mid x_i)\,dy_i
= 0.
\end{align*}
This proves \eqref{eq:gexit-derivative}.
\end{proof}

\subsection{Proof of the Mismatched GEXIT Conservation Law}
\label{subsec:gexit-area-proof}
\begin{proof}
Let $P_X, Q_X$ be the distributions on $\mathcal X^n$ satisfying $P_X \ll Q_X$.
Let $P_{X\mid Y(\mathbf t)}$ and $Q_{X\mid Y(\mathbf t)}$ be defined as
\begin{align*}
    P_{X\mid Y(\mathbf t)}(x\mid y)
    &\triangleq
    \frac{P_X(x)\prod_{j=1}^n W_{t_j}(y_j\mid x_j)}{\sum_{\bar x\in\mathcal X^n} P_X(\bar x)\prod_{j=1}^n W_{t_j}(y_j\mid \bar x_j)},\\
    Q_{X\mid Y(\mathbf t)}(x\mid y)
    &\triangleq
    \frac{Q_X(x)\prod_{j=1}^n W_{t_j}(y_j\mid x_j)}{\sum_{\bar x\in\mathcal X^n} Q_X(\bar x)\prod_{j=1}^n W_{t_j}(y_j\mid \bar x_j)}.
\end{align*}
By the chain rule for conditional entropy,
\begin{align*}
\CE\!\big(P_{X\mid Y(\mathbf t)},Q_{X\mid Y(\mathbf t)}\big)
&=
\CE\!\big(P_{X_i\mid Y(\mathbf t)},Q_{X_i\mid Y(\mathbf t)}\big)
+
\CE\!\big(P_{X_{-i}\mid X_i,Y_{-i}(\mathbf t_{-i})},Q_{X_{-i}\mid X_i,Y_{-i}(\mathbf t_{-i})}\big),
\end{align*}
where the second term averages over $(X_i,Y_{-i}(\mathbf t_{-i}))$ under the law induced by $P_X$.  The second term depends only on $\mathbf t_{-i}$, so its derivative with respect to $t_i$ is zero.
Consequently,
\begin{align*}
\frac{\partial}{\partial t_i}
\CE\!\big(P_{X\mid Y(\mathbf{t})},Q_{X\mid Y(\mathbf{t})}\big)
&=
\frac{\partial}{\partial t_i}
\CE\!\big(P_{X_i\mid Y(\mathbf{t})},Q_{X_i\mid Y(\mathbf{t})}\big).
\end{align*}
Let
\begin{align*}
F(\mathbf{t})
\triangleq
\CE\!\big(P_{X\mid Y(\mathbf{t})},Q_{X\mid Y(\mathbf{t})}\big).
\end{align*}

For every compact subset $K \subset (0,1)$, assume the envelope bounds in (C2) hold uniformly for $t$ in a compact neighborhood of $\gamma(K)$, with $\sum_i |\gamma_i ' (t)| B_i$ integrable.
Then $F\circ \gamma$ is absolutely continuous and the chain rule holds almost everywhere.
Thus, integrating along
$\gamma$ gives
\begin{align*}
\int_0^1\sum_{i=1}^n
\frac{\partial}{\partial t_i}
F(\mathbf{t})\big|_{\mathbf{t}=\gamma(\tau)}\gamma_i^\prime(\tau)\,d\tau
=
F(\gamma(1))-F(\gamma(0)).
\end{align*}
Using the endpoint convention (C3), at the clean endpoint, $Y(\gamma(0))=X$ under both input laws. Hence
$P_{X\mid Y(\gamma(0))=y}=Q_{X\mid Y(\gamma(0))=y}=\delta_y$, and
$F(\gamma(0))=0$. At the uninformative endpoint, the channel output is
independent of $X$ under both input laws. Hence
$P_{X\mid Y(\gamma(1))}=P_X$ and $Q_{X\mid Y(\gamma(1))}=Q_X$,
so $F(\gamma(1))=\CE(P_X,Q_X)$.
\end{proof}

\subsection{Proof of Proposition~\ref{thm:exit-derivative}}
\label{subsec:masked-exit-proof}
\begin{proof}
Let $m=\texttt{[MASK]}$. By Theorem~\ref{thm:gexit-derivative},
\begin{align*}
\frac{\partial}{\partial t_i}
\CE(P_{X_i\mid Y},Q_{X_i\mid Y})
&=
\E\!\left[
-S_{t_i}(X_i,Y_i)\log Q_{X_i\mid Y}(X_i\mid Y)
\right]\\
&\quad+
\E\!\left[
\E_{X'\sim Q_{X_i\mid Y}}
\!\left[S_{t_i}(X',Y_i)\mid Y\right]
\right].
\end{align*}
Condition on $Y_{-i}$. Under the masking channel, $Y_i=m$ with probability $t_i$ and $Y_i=X_i$ with probability $1-t_i$. Since $S_{t_i}(x,m)=1/t_i$ and $S_{t_i}(x,x)=-1/(1-t_i)$, the first term splits as
\begin{align*}
\E\!\left[
-S_{t_i}(X_i,Y_i)\log Q_{X_i\mid Y}(X_i\mid Y)
\right]
&=
\CE_i^{\mathrm{mask}}(\mathbf{t})-\CE_i^{\mathrm{obs}}(\mathbf{t}).
\end{align*}
The observed branch enters with a minus sign because $S_{t_i}(x,x)=-1/(1-t_i)$.
For the correction term, when $Y_i=m$ the score average is $1/t_i$, and when $Y_i=X_i$ the score average is $-1/(1-t_i)$. Therefore
\begin{align*}
\E\!\left[
\E_{X'\sim Q_{X_i\mid Y}}
\!\left[S_{t_i}(X',Y_i)\mid Y\right]
\right]
&=
\frac{1}{t_i}\Pr(Y_i=m)-\frac{1}{1-t_i}\Pr(Y_i=X_i)\\
&=1-1=0.
\end{align*}
Hence
\begin{align*}
\frac{\partial}{\partial t_i}
\CE(P_{X_i\mid Y},Q_{X_i\mid Y})
&=
\CE_i^{\mathrm{mask}}(\mathbf{t})-\CE_i^{\mathrm{obs}}(\mathbf{t}),
\end{align*}
which is exactly \eqref{eq:exit-derivative}.
\end{proof}

\subsection{Proof of Proposition~\ref{thm:qary-gexit-derivative} (uniform replacement channel)}
\label{subsec:qary-gexit-proof}
\begin{proof}
It is convenient to first differentiate with respect to the native
pass-through probability $a_i$. For the uniform replacement channel,
\begin{align*}
S_{a_i}(x,y)
&=
\begin{cases}
1/a_i, & y=x,\\
-1/(1-a_i), & y\neq x.
\end{cases}
\end{align*}
Applying the score-form calculation from Theorem~\ref{thm:gexit-derivative}, the channel-score-weighted NLL
term becomes
\begin{align*}
\E\!\left[
-S_{a_i}(X_i,Y_i)\log Q_{X_i\mid Y}(X_i\mid Y)
\right]
&=
\CE^{\mathrm{cor}}_i(\mathbf{s})-\CE^{\mathrm{err}}_i(\mathbf{s}),
\end{align*}
because $\Pr(Y_i=X_i)=a_i$ and $\Pr(Y_i\neq X_i)=1-a_i$. For the correction
term, condition on $Y$. Then
\begin{align*}
\E_{X'\sim Q_{X_i\mid Y}}
\!\left[S_{a_i}(X',Y_i)\mid Y\right]
&=
\frac{Q_{X_i\mid Y}(Y_i\mid Y)}{a_i}
-\frac{1-Q_{X_i\mid Y}(Y_i\mid Y)}{1-a_i}\\
&=
\frac{Q_{X_i\mid Y}(Y_i\mid Y)}{a_i(1-a_i)}
-\frac{1}{1-a_i}.
\end{align*}
Taking expectation over $Y$ and substituting into the same score-form
calculation gives the native-parameter derivative
\begin{align*}
\frac{\partial}{\partial a_i}
\CE\big(P_{X_i\mid Y},Q_{X_i\mid Y}\big)
&=
\CE^{\mathrm{cor}}_i(\mathbf{s})-\CE^{\mathrm{err}}_i(\mathbf{s})
+\frac{1}{a_i(1-a_i)}
\E\!\left[
Q_{X_i\mid Y}(Y_i\mid Y)
\right]
-\frac{1}{1-a_i}.
\end{align*}
Since $a_i=a(s_i)=1-s_i(1-1/q)$, we have
$da_i/ds_i=-(1-1/q)$. Applying the chain rule proves
\eqref{eq:qary-gexit-derivative}.
\end{proof}

\subsection{Proof of Proposition~\ref{cor:gaussian-ce-derivative} from the marginal GEXIT formula}
\label{subsec:gaussian-ce-proof}

The Gaussian CE area formula is a direct consequence of the mismatched GEXIT
formula in Theorem~\ref{thm:gexit-derivative}.
Assume $P_X \ll Q_X$, let $U_i=f(X_i)\in\mathbb{R}^d$, and
$U=(U_1,\ldots,U_n)\in\mathbb{R}^{nd}$, where $f$ is the injective embedding
used in Proposition~\ref{cor:gaussian-ce-derivative}. Instead of
$Y_i(\rho)=U_i+\rho^{-1/2}Z_i$, it is convenient to use the equivalent
observation model
$$
R_i(\rho)=\sqrt{\rho}\,U_i+Z_i,
\qquad
Z_i\sim\mathcal N(0,I_d),
$$
and write $R_\rho=(R_1(\rho),\ldots,R_n(\rho))$. Since
$R_\rho=\sqrt{\rho}\,Y(\rho)$, the two channels induce the same posterior
marginals.

For a nonuniform SNR vector
$\boldsymbol\rho=(\rho_1,\ldots,\rho_n)$, write
$R(\boldsymbol\rho)$ for the coordinatewise Gaussian observation. The SNR
parameter runs in the opposite direction from the degradation coordinate in
\eqref{eq:gexit-area-general}: $\rho=0$ is uninformative, while
$\rho=\infty$ is clean. Therefore, applying
\eqref{eq:gexit-area-general} to the uniform SNR path gives
\begin{align*}
\CE(P_X,Q_X)
&=
-
\int_0^\infty
\sum_{i=1}^n
D_i(\rho)\,d\rho,
\end{align*}
where
\begin{align*}
D_i(\rho)
&\triangleq
\left.
\frac{\partial}{\partial \rho_i}
\CE\!\left(
P_{X_i\mid R(\boldsymbol\rho)},
Q_{X_i\mid R(\boldsymbol\rho)}
\right)
\right|_{\boldsymbol\rho=(\rho,\ldots,\rho)}.
\end{align*}

The Gaussian channel score with respect to $\rho$ is
$$
S_\rho(u_i,r_i)
=
\frac{\partial}{\partial\rho}
\log W_\rho(r_i|u_i)
=
\frac{1}{2\sqrt{\rho}}
(r_i-\sqrt{\rho}u_i)^\top u_i.
$$
For each coordinate, define the model marginal posterior mean
$$
m_{Q,i}(r)
\triangleq
\E_{X'\sim Q_{X_i\mid R_\rho}(\cdot\mid r)}
\!\left[f(X')\right],
$$
and set $m_Q(r)=(m_{Q,1}(r),\ldots,m_{Q,n}(r))$.
Theorem~\ref{thm:gexit-derivative} gives the marginal score decomposition
\begin{align*}
D_i(\rho)
&=
\underbrace{
\E_P
\!\left[
-
S_\rho(U_i,R_i)
\log Q_{X_i\mid R_\rho}(X_i\mid R_\rho)
\right]
}_{\substack{T_{1,i}\\\text{log-likelihood term}}}
\\
&+
\underbrace{
\E_P
\!\left[
\E_{X'\sim Q_{X_i\mid R_\rho}(\cdot\mid R_\rho)}
\!\left[
S_\rho(f(X'),R_i)\mid R_\rho
\right]
\right]
}_{\substack{T_{2,i}\\\text{posterior-score correction}}}.
\end{align*}

To convert the local Gaussian noise factors in $T_{1,i}$ and $T_{2,i}$ into
derivatives with respect to $r_i$, we use the following vector Stein identity.
\begin{lemma}[Vector Stein identity]
\label{lem:vector-stein}
Let $Z\sim\mathcal N(0,I_D)$ and $R=\mu+Z$, where $\mu\in\mathbb R^D$ is
fixed. Let $\phi_\mu$ denote the density of $\mathcal N(\mu,I_D)$, and suppose
the expectations below are finite.

If $h\colon\mathbb R^D\to\mathbb R$ is continuously differentiable and, for
each coordinate $j$, $h(r)\phi_\mu(r)$ vanishes as $r_j\to\pm\infty$ with the
other coordinates fixed, then, for every fixed $a\in\mathbb R^D$,
\begin{equation}
\label{eq:vector-stein-scalar}
\E\!\left[Z^\top a\,h(R)\right]
=
\E\!\left[a^\top\nabla h(R)\right].
\end{equation}
If $g\colon\mathbb R^D\to\mathbb R^D$ is continuously differentiable and, for
each coordinate $j$, $g_j(r)\phi_\mu(r)$ vanishes as $r_j\to\pm\infty$ with the
other coordinates fixed, then
\begin{equation}
\label{eq:vector-stein-divergence}
\E\!\left[Z^\top g(R)\right]
=
\E\!\left[\nabla\!\cdot g(R)\right].
\end{equation}
\end{lemma}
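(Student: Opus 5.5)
The plan is to reduce both identities to the one-dimensional Gaussian integration-by-parts formula $\int z\,\varphi(z)\,\psi(z)\,dz=\int \varphi(z)\,\psi'(z)\,dz$, where $\varphi$ is the standard normal density, applied one coordinate at a time. We write $\phi_\mu(r)=\prod_{j=1}^D\varphi(r_j-\mu_j)$. The first fact to record is that $\partial_{r_j}\phi_\mu(r)=-(r_j-\mu_j)\phi_\mu(r)$.

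For \eqref{eq:vector-stein-scalar}, linearity gives $Z^\top a\,h(R)=\sum_j a_j Z_j h(R)$, so it suffices to prove
\begin{equation*}
\E[Z_j h(R)]=\E[\partial_j h(R)]
\end{equation*}
for each $j$. We write this expectation as $\int (r_j-\mu_j)h(r)\phi_\mu(r)\,dr$ and replace $(r_j-\mu_j)\phi_\mu$ by $-\partial_{r_j}\phi_\mu$. We then apply Fubini, integrating first over $r_j$ with $r_{-j}$ fixed, and integrate by parts in $r_j$. The boundary term $[-h\phi_\mu]_{r_j=-\infty}^{r_j=+\infty}$ vanishes by the stated decay hypothesis. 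What remains is $\int \partial_j h\,\phi_\mu\,dr=\E[\partial_j h(R)]$. Summing over $j$ with the weights $a_j$ yields $\E[a^\top\nabla h(R)]$.

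For \eqref{eq:vector-stein-divergence}, we write $Z^\top g(R)=\sum_j Z_j g_j(R)$ and apply the same coordinatewise argument with $h=g_j$ in coordinate $j$. This gives $\E[Z_j g_j(R)]=\E[\partial_j g_j(R)]$, and summing over $j$ gives the divergence.

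The only delicate step is justifying Fubini and the integration by parts. We need $(r_j-\mu_j)h\phi_\mu$ and $\partial_j h\,\phi_\mu$ to be integrable on $\mathbb R^D$; this is what the hypothesis that ``the expectations below are finite'' provides. Given this, Tonelli ensures that for almost every $r_{-j}$ the one-dimensional slices are integrable. On each such slice we integrate by parts on $[-L,L]$, which is valid because $h\phi_\mu$ is $C^1$. We then let $L\to\infty$, using the vanishing boundary values together with dominated convergence for the interior integrals. Integrating the resulting slice identity over $r_{-j}$ concludes the argument.

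In the intended application, $h$ and $g$ are built from $\log Q$ posteriors and posterior means over a finite alphabet, so they grow at most polynomially in $r$. The Gaussian tails then make all these conditions immediate.
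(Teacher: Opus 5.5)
Your proof is correct and is the standard coordinatewise Gaussian integration-by-parts argument. The paper does not prove the lemma; it only cites it as the finite-dimensional Gaussian integration-by-parts identity (Bogachev, Thm.~5.1.8), so your slice-wise Fubini argument is what that citation stands for.

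The one point to tighten is the divergence identity. Finiteness of $\E[Z^\top g(R)]$ and $\E[\nabla\!\cdot g(R)]$ does not by itself give integrability of each $Z_j g_j(R)$ and $\partial_j g_j(R)$, which your splitting uses. You can either read the hypothesis componentwise, which is automatic in the paper's application since $m_{Q,i}$ and its divergence are bounded. Or you can avoid splitting: apply the divergence theorem to $g\phi_\mu$ on balls $B_\rho(\mu)$, where integrability of $(r-\mu)^\top g\,\phi_\mu$ forces the boundary flux to vanish along some sequence $\rho_k\to\infty$.
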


Once Stein's identity has moved derivatives onto the marginal $Q$-posterior,
the next identities evaluate the resulting local gradient and divergence.
\begin{lemma}[Gaussian marginal-posterior gradient and divergence]
\label{lem:gaussian-posterior-grad-div}
For the coordinatewise Gaussian channel
$R_i(\rho)=\sqrt{\rho}\,U_i+Z_i$ under the model law $Q$, let
$m_{Q,i}(r)=\E_Q[U_i|R_\rho=r]$. Then, for every $x_i$ with positive marginal
posterior support,
\begin{equation}
\label{eq:gaussian-posterior-score}
\nabla_{r_i}
\log Q_{X_i\mid R_\rho}(x_i|r)
=
\sqrt{\rho}
\bigl(
f(x_i)-m_{Q,i}(r)
\bigr),
\end{equation}
and
\begin{equation}
\label{eq:gaussian-posterior-divergence}
\nabla_{r_i}\!\cdot m_{Q,i}(r)
=
\sqrt{\rho}
\left(
\E_{X'\sim Q_{X_i\mid R_\rho}(\cdot\mid r)}
\!\left[\|f(X')\|^2\right]
-
\|m_{Q,i}(r)\|^2
\right).
\end{equation}
\end{lemma}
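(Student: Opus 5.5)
The proof is a direct computation from the Bayes factorization \eqref{eq:base-posterior-factorization} specialized to the Gaussian kernel in the $R$-parametrization. Under $Q$, the marginal posterior is
\begin{align*}
Q_{X_i\mid R_\rho}(x_i\mid r)
=\frac{Q_{X_i\mid R_{-i}}(x_i\mid r_{-i})\,\phi(r_i-\sqrt{\rho}f(x_i))}{\sum_{u}Q_{X_i\mid R_{-i}}(u\mid r_{-i})\,\phi(r_i-\sqrt{\rho}f(u))},
\end{align*}
where $\phi$ is the standard Gaussian density on $\mathbb R^d$. The key point is that the extrinsic factor $Q_{X_i\mid R_{-i}}$ does not depend on $r_i$, so every $r_i$-derivative falls only on the Gaussian kernel and on the normalizer. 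The alphabet $\mathcal X$ is finite, so all sums are finite and differentiation under the sum is immediate.

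For \eqref{eq:gaussian-posterior-score}, take logarithms and write
\begin{align*}
\log\phi(r_i-\sqrt\rho f(x_i))=-\tfrac12\|r_i\|^2+\sqrt\rho\, r_i^\top f(x_i)-\tfrac{\rho}{2}\|f(x_i)\|^2+\text{const}.
\end{align*}
Its $r_i$-gradient is $-r_i+\sqrt\rho f(x_i)$. The gradient of the log-normalizer is the $Q$-posterior average of the same expression, namely $-r_i+\sqrt\rho\, m_{Q,i}(r)$. Subtracting the two cancels the $-r_i$ terms and gives $\sqrt\rho\,(f(x_i)-m_{Q,i}(r))$. This step only requires $Q_{X_i\mid R_\rho}(x_i\mid r)>0$, so the logarithm is defined.

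For \eqref{eq:gaussian-posterior-divergence}, write $m_{Q,i}(r)=\sum_{x}f(x)\,Q_{X_i\mid R_\rho}(x\mid r)$. Then use $\nabla_{r_i}Q=Q\,\nabla_{r_i}\log Q$ together with the first identity to get the Jacobian
\begin{align*}
\nabla_{r_i} m_{Q,i}^\top=\sqrt\rho\sum_x Q(x\mid r)\,\bigl(f(x)-m_{Q,i}\bigr)f(x)^\top,
\end{align*}
which is $\sqrt\rho$ times the posterior covariance of $f(X')$. Taking the trace gives $\sqrt\rho\bigl(\E_{X'\sim Q}\|f(X')\|^2-\|m_{Q,i}\|^2\bigr)$, because $\sum_x Q(x\mid r)(f(x)-m_{Q,i})^\top f(x)$ equals the second moment minus the squared mean.

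There is no real obstacle here. The one point to state explicitly is that the model posterior is defined through the extrinsic-prior factorization, so $r_i$ enters only through the local Gaussian likelihood. This is exactly why the identities hold for the mismatched $Q$ and not only for the true posterior.
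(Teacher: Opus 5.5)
Your proposal is correct and takes the same route as the paper. You use the Bayes factorization with an $r_i$-independent extrinsic factor, so the gradient of the log-normalizer is the posterior average of $-r_i+\sqrt{\rho}f(x')$. You then use $\nabla Q=Q\nabla\log Q$ with the first identity to obtain the divergence as $\sqrt{\rho}$ times the trace of the posterior covariance of $f(X')$; the paper does the same computation coordinatewise.
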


Lemma~\ref{lem:vector-stein} is the standard finite-dimensional Gaussian
integration-by-parts identity; see, e.g.,
\cite[Thm.~5.1.8, p.~209]{Bogachev1998GaussianMeasures}. The proof of
Lemma~\ref{lem:gaussian-posterior-grad-div} is given at the end of this
subsection.
\paragraph{Marginal log-likelihood term.}
Let $T_{1,i}$ denote the first summand in the marginal score decomposition.
\begin{align*}
T_{1,i}
&\triangleq
\E_P
\!\left[
-
S_\rho(U_i,R_i)
\log Q_{X_i\mid R_\rho}(X_i\mid R_\rho)
\right]\\
&\stackrel{(a)}{=}
-\frac{1}{2\sqrt{\rho}}
\E_P
\!\left[
Z_i^\top U_i\log Q_{X_i\mid R_\rho}(X_i\mid R_\rho)
\right]\\
&\stackrel{(b)}{=}
-\frac{1}{2\sqrt{\rho}}
\E_P
\!\left[
\E_P
\!\left[
Z_i^\top U_i\log Q_{X_i\mid R_\rho}(X_i\mid R_\rho)
\,\middle|\,
X,R_{-i}
\right]
\right]\\
&\stackrel{(c)}{=}
-\frac{1}{2\sqrt{\rho}}
\E_P
\!\left[
U_i^\top
\nabla_{r_i}\log Q_{X_i\mid R_\rho}(X_i\mid R_\rho)
\right]\\
&\stackrel{(d)}{=}
-\frac12
\E_P
\!\left[
U_i^\top\bigl(U_i-m_{Q,i}(R_\rho)\bigr)
\right]\\
&=
-\frac12
\E_P
\left[
\|U_i\|^2
-
U_i^\top m_{Q,i}(R_\rho)
\right].
\end{align*}
Here, (a) substitutes the Gaussian score and uses
$R_i-\sqrt{\rho}U_i=Z_i$. Step (b) is the tower property, conditioning on
$X$ and the side information $R_{-i}$. Under this conditioning, $U_i$ and
$R_{-i}$ are fixed, while $R_i=\sqrt{\rho}U_i+Z_i$ is the only remaining
Gaussian variable. Step (c) applies the scalar identity
\eqref{eq:vector-stein-scalar} to the function
$h_{x_i,r_{-i}}(r_i)=\log Q_{X_i\mid R_\rho}(x_i\mid r_i,r_{-i})$.
Step (d) uses the marginal posterior-score identity
\eqref{eq:gaussian-posterior-score}; the final equality only expands the inner
product.

\paragraph{Marginal posterior-score correction term.}
Let $T_{2,i}$ denote the second summand in the marginal score decomposition.
\begin{align*}
T_{2,i}
&\triangleq
\E_P
\!\left[
\E_{X'\sim Q_{X_i\mid R_\rho}(\cdot\mid R_\rho)}
\!\left[
S_\rho(f(X'),R_i)
\,\middle|\,
R_\rho
\right]
\right]\\
&=
\frac{1}{2\sqrt{\rho}}
\E_P
\left[
R_i^\top m_{Q,i}(R_\rho)
-
\sqrt{\rho}\,
\E_{X'\sim Q_{X_i\mid R_\rho}(\cdot\mid R_\rho)}
\!\left[\|f(X')\|^2\right]
\right].
\end{align*}
The equality holds by taking the marginal $Q$-posterior expectation of the
score: the term linear in $f(X')$ gives $m_{Q,i}(R_\rho)$, and the quadratic
term gives the marginal posterior second moment. The remaining term is
evaluated as
\begin{align*}
\E_P\!\left[R_i^\top m_{Q,i}(R_\rho)\right]
&\stackrel{(a)}{=}
\sqrt{\rho}\,
\E_P\!\left[U_i^\top m_{Q,i}(R_\rho)\right]
+
\E_P\!\left[Z_i^\top m_{Q,i}(R_\rho)\right]\\
&\stackrel{(b)}{=}
\sqrt{\rho}\,
\E_P\!\left[U_i^\top m_{Q,i}(R_\rho)\right]
+
\E_P\!\left[\nabla_{r_i}\!\cdot m_{Q,i}(R_\rho)\right]\\
&\stackrel{(c)}{=}
\sqrt{\rho}\,
\E_P
\!\left[
U_i^\top m_{Q,i}(R_\rho)
+
\E_{X'\sim Q_{X_i\mid R_\rho}(\cdot\mid R_\rho)}
\!\left[\|f(X')\|^2\right]
-
\|m_{Q,i}(R_\rho)\|^2
\right].
\end{align*}
Here, (a) expands $R_i=\sqrt{\rho}U_i+Z_i$. Step (b) applies the vector-field
identity \eqref{eq:vector-stein-divergence} conditionally on $X$ and
$R_{-i}$, with $g(r_i)=m_{Q,i}(r_i,R_{-i})$, and then averages over
$(X,R_{-i})$. Step (c) uses the marginal divergence identity
\eqref{eq:gaussian-posterior-divergence}.

\paragraph{Tail conditions for Lemma~\ref{lem:vector-stein}.}
It remains to verify the boundary conditions in the two applications of the Stein identity above.
Since $\mathcal X^n$ is finite and $f$ is fixed,
$B\triangleq\max_x\|f(x)\|$ is finite.  For every $x_i$ inside the
$P$-expectation, finite cross entropy implies positive model marginal
posterior probability.  Fixing $r_{-i}$, write $u_{x_i}=f(x_i)$.  The marginal
posterior log-probability used in Step (c) has the form
\begin{align*}
\log Q_{X_i\mid R_\rho}(x_i|r_i,r_{-i})
&=
c_{x_i}(r_{-i})+\sqrt{\rho}\,r_i^\top u_{x_i}
-
\log
\sum_{x_i'}
\exp\!\left\{
c_{x_i'}(r_{-i})+\sqrt{\rho}\,r_i^\top u_{x_i'}
\right\},
\end{align*}
where
$c_{x_i}(r_{-i})=\log Q_{X_i\mid R_{-i}}(x_i|r_{-i})
-\frac{\rho}{2}\|u_{x_i}\|^2$. The last term is a finite log-sum-exp of affine
functions in $r_i$, so
$|\log Q_{X_i\mid R_\rho}(x_i|r_i,r_{-i})|\le C(r_{-i})(1+\|r_i\|)$ for a
finite constant $C(r_{-i})$. Also,
$\|\nabla_{r_i}\log Q_{X_i\mid R_\rho}(x_i|r)\|\le 2\sqrt{\rho}\,B$,
$\|m_{Q,i}(r)\|\le B$, and
$|\nabla_{r_i}\!\cdot m_{Q,i}(r)|\le \sqrt{\rho}\,B^2$. Therefore the products
of the scalar test function in Step (c) and the vector-field components in
Step (b) with the Gaussian density in $r_i$ vanish as any local coordinate goes
to $\pm\infty$.

Substituting the preceding identity
into the expression for $T_{2,i}$ cancels the marginal posterior second-moment
terms and gives
\begin{align*}
T_{2,i}
&=
\frac12
\E_P
\left[
U_i^\top m_{Q,i}(R_\rho)
-
\|m_{Q,i}(R_\rho)\|^2
\right].
\end{align*}
\paragraph{Combining and integrating.}
Adding the two contributions gives
$$
D_i(\rho)
=
-\frac12
\E_P
\left[
\|U_i-m_{Q,i}(R_\rho)\|^2
\right].
$$
This equality follows by adding the evaluated $T_{1,i}$ and $T_{2,i}$ terms:
the cross terms combine to $2U_i^\top m_{Q,i}(R_\rho)$, completing the square.

Define the mismatched estimation error
$$
\operatorname{MSE}_Q(\rho)
\triangleq
\E_P
\left[
\sum_{i=1}^n
\|U_i-m_{Q,i}(R_\rho)\|^2
\right].
$$
Plugging the marginal derivative into the conservation law yields
\begin{align*}
\CE(P_X,Q_X)
&=
\frac12
\int_0^\infty
\operatorname{MSE}_Q(\rho)\,d\rho,
\end{align*}
which is exactly Proposition~\ref{cor:gaussian-ce-derivative}, because
$R_\rho=\sqrt{\rho}\,Y(\rho)$ induces the same marginal posteriors as
$Y(\rho)$ and $\tilde Q_{X|Y(\rho)}$ is the product of those marginals.
In particular, the integrand depends on $Q$ only through the coordinatewise posterior means.
Thus, the same integrand can be evaluated from the collection of marginal posteriors, although exact equality requires that these marginals arise from a globally consistent $Q_X$.

\begin{proof}[Proof of Lemma~\ref{lem:gaussian-posterior-grad-div}]
Write $u_{x_i}=f(x_i)$ and fix $r_{-i}$. By the base posterior factorization
\eqref{eq:base-posterior-factorization}, the local normalization is
$$
Z_{Q,i}(r)
\triangleq
\sum_{x_i'} Q_{X_i\mid R_{-i}}(x_i'|r_{-i})
\exp\left\{
-\frac12\|r_i-\sqrt{\rho}u_{x_i'}\|^2
\right\}.
$$
For $x_i$ with positive marginal posterior support, Bayes' rule gives
$$
\log Q_{X_i\mid R_\rho}(x_i|r)
=
\log Q_{X_i\mid R_{-i}}(x_i|r_{-i})
-\frac12\|r_i-\sqrt{\rho}u_{x_i}\|^2
-\log Z_{Q,i}(r).
$$
Differentiating this identity with respect to $r_i$ gives
$$
\nabla_{r_i}\log Q_{X_i\mid R_\rho}(x_i|r)
=
-(r_i-\sqrt{\rho}u_{x_i})-\nabla_{r_i}\log Z_{Q,i}(r).
$$
The normalization term satisfies
\begin{align*}
\nabla_{r_i}\log Z_{Q,i}(r)
&=
\sum_{x_i'} Q_{X_i\mid R_\rho}(x_i'|r)
\bigl(-(r_i-\sqrt{\rho}u_{x_i'})\bigr)\\
&=
-r_i+\sqrt{\rho}\,m_{Q,i}(r).
\end{align*}
Substituting this expression into the preceding equation gives
$$
\nabla_{r_i}\log Q_{X_i\mid R_\rho}(x_i|r)
=
\sqrt{\rho}\bigl(u_{x_i}-m_{Q,i}(r)\bigr),
$$
which proves the gradient identity. For the divergence identity, use
$m_{Q,i}(r)=\sum_{x_i}Q_{X_i\mid R_\rho}(x_i|r)u_{x_i}$ and differentiate
componentwise in the local vector $r_i$:
\begin{align*}
\nabla_{r_i}\!\cdot m_{Q,i}(r)
&\stackrel{(a)}{=}
\sum_{j=1}^d
\sum_{x_i} u_{x_i,j}
\frac{\partial}{\partial r_{i,j}}Q_{X_i\mid R_\rho}(x_i|r)\\
&\stackrel{(b)}{=}
\sum_{j=1}^d
\sum_{x_i} u_{x_i,j}Q_{X_i\mid R_\rho}(x_i|r)
\frac{\partial}{\partial r_{i,j}}
\log Q_{X_i\mid R_\rho}(x_i|r)\\
&\stackrel{(c)}{=}
\sqrt{\rho}
\sum_{j=1}^d
\left(
\E_{X'\sim Q_{X_i\mid R_\rho}(\cdot\mid r)}
\!\left[f_j(X')^2\right]
-
m_{Q,i,j}(r)^2
\right)\\
&\stackrel{(d)}{=}
\sqrt{\rho}
\left(
\E_{X'\sim Q_{X_i\mid R_\rho}(\cdot\mid r)}
\!\left[\|f(X')\|^2\right]
-
\|m_{Q,i}(r)\|^2
\right).
\end{align*}
Here (a) expands the divergence of
$m_{Q,i}(r)=\sum_{x_i}Q_{X_i\mid R_\rho}(x_i|r)u_{x_i}$ coordinate by
coordinate; the vectors $u_{x_i}$ are fixed, so the derivative acts only on
$Q_{X_i\mid R_\rho}(x_i|r)$. Step (b) uses
$\partial_{r_{i,j}}Q_{X_i\mid R_\rho}
=Q_{X_i\mid R_\rho}\partial_{r_{i,j}}\log Q_{X_i\mid R_\rho}$. Step (c)
substitutes the gradient identity
$\partial_{r_{i,j}}\log Q_{X_i\mid R_\rho}(x_i|r)
=\sqrt{\rho}(u_{x_i,j}-m_{Q,i,j}(r))$ and then collects the marginal posterior
average
$$
\sum_{x_i} Q_{X_i\mid R_\rho}(x_i|r)
u_{x_i,j}(u_{x_i,j}-m_{Q,i,j}(r))
=
\E_{X'\sim Q_{X_i\mid R_\rho}(\cdot\mid r)}
\!\left[f_j(X')^2\right]
-
m_{Q,i,j}(r)^2.
$$
Step (d) sums these local coordinate variances into the second form in
\eqref{eq:gaussian-posterior-divergence}.
\end{proof}

\subsection{Proof via I--MMSE and Verd\'u's mismatched estimation formula}
\label{subsec:gaussian-ce-verdu-proof}
\begin{proof}
Use the notation from Subsection~\ref{subsec:gaussian-ce-proof}:
$U_i=f(X_i)$, $U=(U_1,\ldots,U_n)$, and
$R_\rho=\sqrt{\rho}\,U+Z$. Let
$m_P(r)=\E_P[U|R_\rho=r]$ and let
$m_Q(r)=(m_{Q,1}(r),\ldots,m_{Q,n}(r))$ be the stacked vector of model
marginal posterior means. Define
$$
\operatorname{MMSE}_P(\rho)
\triangleq
\E_P\!\left[\|U-m_P(R_\rho)\|^2\right],
\qquad
\operatorname{MSE}_Q(\rho)
\triangleq
\E_P\!\left[\|U-m_Q(R_\rho)\|^2\right].
$$
With this SNR parameterization, and because $X$ has finite alphabet and $f$ is
injective, the Gaussian entropy--minimum mean-square error (MMSE) relation gives
\begin{align*}
H(P_X)
&=
\frac{1}{2}\int_0^\infty
\operatorname{MMSE}_P(\rho)\,d\rho.
\end{align*}
Verd\'u's mismatched relative-entropy formula, written with the same posterior
means, gives
\begin{align*}
D\!\big(P_X\|Q_X\big)
&=
\frac{1}{2}\int_0^\infty
\Big(
\operatorname{MSE}_Q(\rho)
-
\operatorname{MMSE}_P(\rho)
\Big)\,d\rho,
\end{align*}
Combining the two identities yields the cross-entropy area formula:
\begin{align*}
\CE\!\big(P_X,Q_X\big)
&=
\frac{1}{2}\int_0^\infty
\operatorname{MSE}_Q(\rho)\,d\rho.
\end{align*}
\end{proof}

\subsection{Connection between the GEXIT derivation and Verd\'u's formula}
\label{subsec:gaussian-ce-verdu-connection}

The GEXIT proof also explains how the cross-entropy identity decomposes into
the matched I--MMSE identity and Verd\'u's mismatched relative-entropy formula.

Throughout this subsection we keep the notation of
Subsection~\ref{subsec:gaussian-ce-proof}: $U_i=f(X_i)$,
$U=(U_1,\ldots,U_n)$, $R_\rho=\sqrt{\rho}\,U+Z$,
$m_P(r)=\E_P[U|R_\rho=r]$, and
$m_Q(r)=(m_{Q,1}(r),\ldots,m_{Q,n}(r))$. Thus
$$
\operatorname{MMSE}_P(\rho)
\triangleq
\E_P\!\left[\|U-m_P(R_\rho)\|^2\right],
\qquad
\operatorname{MSE}_Q(\rho)
\triangleq
\E_P\!\left[\|U-m_Q(R_\rho)\|^2\right].
$$

\paragraph{Matched model.}
Setting $Q=P$ in the GEXIT derivation gives
\begin{equation}
F_P(\rho)
=
H(X|R_\rho),
\end{equation}
and the same calculation yields
\begin{equation}
-
\frac{d}{d\rho}
H(X|R_\rho)
=
\frac12
\operatorname{MMSE}_P(\rho).
\end{equation}
Consequently,
\begin{equation}
H(P_X)
=
\frac12
\int_0^\infty
\operatorname{MMSE}_P(\rho)\,d\rho,
\end{equation}
which is the classical I--MMSE identity.

\paragraph{Verd\'u's mismatched relative-entropy identity.}
Subtracting the matched identity from the mismatched CE identity gives
\begin{equation}
\begin{aligned}
D(P_X\|Q_X)
&=
\CE(P_X,Q_X)-H(P_X)\\
&=
\frac12
\int_0^\infty
\left[
\operatorname{MSE}_Q(\rho)
-
\operatorname{MMSE}_P(\rho)
\right]
d\rho,
\end{aligned}
\end{equation}
which is Verd\'u's infinite-SNR mismatched estimation formula.

Moreover, by the chain rule for relative entropy through the Gaussian channel,
\begin{equation}
D(P_X\|Q_X)
=
D(P_{R_\rho}\|Q_{R_\rho})
+
\E_{P_{R_\rho}}
D(P_{X|R_\rho}\|Q_{X|R_\rho}).
\end{equation}
Since
\begin{equation}
\E_{P_{R_\rho}}
D(P_{X|R_\rho}\|Q_{X|R_\rho})
=
F_Q(\rho)-F_P(\rho),
\end{equation}
we obtain
\begin{equation}
\frac{d}{d\rho}
D(P_{R_\rho}\|Q_{R_\rho})
=
-
\frac{d}{d\rho}
\bigl(
F_Q(\rho)-F_P(\rho)
\bigr)
=
\frac12
\left[
\operatorname{MSE}_Q(\rho)
-
\operatorname{MMSE}_P(\rho)
\right].
\end{equation}
Since $R_0=Z$ is independent of $X$, we have
\begin{equation}
P_{R_0}=Q_{R_0},
\end{equation}
and therefore
\begin{equation}
D(P_{R_\rho}\|Q_{R_\rho})
=
\frac12
\int_0^\rho
\left[
\operatorname{MSE}_Q(\gamma)
-
\operatorname{MMSE}_P(\gamma)
\right]
d\gamma.
\end{equation}

This is Verd\'u's finite-SNR output-divergence identity. Letting
$\rho\rightarrow\infty$ recovers the previous relative-entropy formula.

%% file: sections/appendix_repro.tex
\subsection{Integration Endpoints}
\label{app:endpoints}
For the conservation laws to recover cross-entropy, the integration path must
connect a clean-data endpoint to a pure-noise endpoint, where the observation
is independent of the clean token. Each channel family in the paper is
parameterized to satisfy this condition.

For masking, the corruption parameter satisfies $t=0$ at clean data and $t=1$ at
pure noise. For the uniform channel, degradation progress $s$ runs from
$s=0$ at the clean endpoint to $s=1$ at pure noise; equivalently, the
pass-through probability is $a(s)=1-s(1-1/|\mathcal X|)$, ranging from
$a=1$ to $a=1/|\mathcal X|$. For Gaussian diffusion, SNR is a native parameter
whose direction is reversed relative to degradation progress: the low-SNR limit
is pure noise and the high-SNR limit is pure signal. In practice, we choose the
high-SNR endpoint so that the
posterior-mean MSE induced by the model is negligible on the evaluation grid.
These endpoint constraints are what make the integrated local derivative equal
to the desired likelihood-scale quantity.

\subsection{Synthetic Markov Experiments}\label{app:markov}
\paragraph{Experimental setting.}
The synthetic experiments use first-order Markov sources with alphabet sizes
$K\in\{8,16,32,64,128,256,512\}$ and sequence length 256. For each value of
$K$, every transition row is sampled from a symmetric Dirichlet distribution
with concentration $\alpha=0.05$, and the initial distribution of every
sequence is set to the stationary distribution of the sampled chain. All
objectives use the same transformer architecture: embedding dimension 256, 8
layers, 8 attention heads, and dropout 0.1. Models are trained for $100,000$
optimization steps with batch size 256. Optimization uses Adam \citep{KingmaBa2015Adam}
with learning rate
$5\times10^{-4}$. Training and evaluation use the same integration endpoints
for each channel family, so the conservation-law area is computed over the
same path on which the denoiser is trained. For the Gaussian objective,
the SNR interval is $[10^{-7},10^{-2}]$; this range was selected after verifying
that the MSE vanishes for larger SNRs.

\paragraph{Conservation-law curve comparison.}
To compare the three corruption families under controlled conditions, we
evaluate masked, uniform, and Gaussian conservation-law curves on the same
$K=128$ Markov source. Figure~\ref{fig:exp_markov_channel_comparison_curves} plots the learned
integrands against exact references computed by Bahl--Cocke--Jelinek--Raviv (BCJR) forward--backward inference
\citep{Bahl1974BCJR}. The comparison isolates the numerical effect of the
channel parameterization: all three panels correspond to the same target
entropy, but they induce different local denoising problems and therefore
different finite-model approximation errors.

\begin{figure}[h]
\centering
\input{shared/figures/tikz/exp_markov_channel_comparison_curves}
\caption{Comparison of learned and exact conservation-law curves for a $K=128$ Markov source under three channel parameterizations. Each panel reports the neural estimate together with the BCJR reference obtained by exact forward--backward inference: masked noise as a function of mask rate $t$, uniform noise as a function of replace rate $1-t$, and Gaussian noise as a function of SNR $\rho$. The Gaussian panel is plotted on linear horizontal and vertical scales.}
\label{fig:exp_markov_channel_comparison_curves}
\end{figure}

\subsection{Language Modeling on \texttt{text8}}\label{app:lm-repro}
\paragraph{Data preparation.}
The language-model experiments are conducted on the standard \texttt{text8}
corpus from the Matt Mahoney benchmark distribution. The dataset preparation
script extracts the raw 100M-character stream from \texttt{text8.zip} and forms
deterministic 90M/5M/5M train/validation/test splits. The raw byte stream is
used without additional normalization or token filtering. Consequently, the
tokenizer is the identity byte tokenizer with vocabulary size 256, and each
token corresponds to a single byte. Training examples are sampled as independent
contiguous crops of length 256 from the training split, whereas evaluation is
performed on non-overlapping blocks from the validation or test split.

\paragraph{Model.}
All \texttt{text8} diffusion objectives use a common transformer backbone with
hidden size 768, 12 layers, 12 attention heads, feed-forward width 3072, and
dropout 0.1. The models for masked, uniform, and Gaussian noise are parameterized by a
bidirectional discrete-token transformer; the masked model additionally augments
the input vocabulary with a mask symbol. The Gaussian-noise model first embeds
the discrete values in the embedding space and adds noise to the embedding
vectors. All diffusion models use diffusion transformer (DiT)-style timestep conditioning \citep{PeeblesXie2023DiT} with a 256-dimensional sinusoidal input embedding and
a 768-dimensional conditioning width. This yields 128.6M parameters for the
discrete-channel models and 129.2M parameters for the
Gaussian model.

\paragraph{Training and evaluation.}
All reported \texttt{text8} runs use seed 2026 and are optimized with AdamW \citep{LoshchilovHutter2019AdamW}
using $\beta_1=0.9$, $\beta_2=0.95$, weight decay 0.1, gradient clipping at norm
1.0, a 10k-step learning-rate warmup, and minimum learning rate $10^{-5}$. Each
run is trained for 500K optimization steps. The masked and uniform
noise models use learning rate $3\times10^{-4}$, while the Gaussian-noise model uses learning rate
$2\times10^{-4}$. The effective batch size is 512 sequences. Mask noise is sampled
uniformly from $[0,1]$ and evaluated on 100 mask points. The uniform-noise model samples the
channel parameter over $[1/256,0.9999]$ and is evaluated on 100 points. For Gaussian noise, the SNR is sampled
log-uniformly over $[10^{-6},1]$ and evaluated on 100 SNR points. Test curves are
computed on the entire test set for each point across the noise-parameter range.

\paragraph{Resources.}
Each \texttt{text8} simulation used a single NVIDIA H200 graphics processing unit (GPU). Dataset
preparation is central processing unit (CPU)-only and is performed once before training. Each run records
the resolved configuration, training history, checkpoints, final metrics, and
exported evaluation curves.

\subsection{CIFAR-10 Image Modeling}\label{app:cifar-repro}
\paragraph{Data preparation.}
The image experiments use CIFAR-10 \citep{Krizhevsky2009CIFAR10}. Each image is
converted to RGB and represented at the subpixel level as 8-bit tokens, yielding
sequences of length $32\cdot32\cdot3=3072$ over a 256-symbol alphabet. We reserve
the final 2\% of the training split for validation, resulting in 49,000 training
images, 1,000 validation images, and 10,000 test images. Training uses random
horizontal flips with probability 0.5, whereas evaluation is performed on the
fixed validation and test token tensors.

\paragraph{Model.}
All CIFAR-10 objectives use a DDPM-style U-Net backbone \citep{Ho2020DDPM} with
128 base channels, channel multipliers $(1,2,2,2)$, two residual blocks per
resolution, self-attention at spatial resolution 16, dropout 0.1, a
128-dimensional sinusoidal timestep input embedding, and a 512-dimensional
timestep embedding width. The masked and uniform objectives use the discrete
RGB-token version of this architecture, while Gaussian uses the
corresponding continuous RGB-token variant.

\paragraph{Training and evaluation.}
All runs use seed~2026 and AdamW \citep{LoshchilovHutter2019AdamW} with
$\beta_1=0.9$, $\beta_2=0.95$, zero weight decay, gradient clipping at norm 1.0,
a 1000-step learning-rate warmup, and minimum learning rate $10^{-5}$. Each model
is trained for 300K optimization steps. The masked and uniform objectives use learning rate
$3\times10^{-4}$ with batch size 128. Gaussian uses
learning rate $2\times10^{-4}$, batch size 128, and log-uniform SNR sampling over
$[10^{-6},10]$. Mask noise is sampled
uniformly from $[0,1]$ and evaluated on 100 mask points. The uniform-noise model samples the
channel parameter over $[1/256,0.9999]$ and is evaluated on 100 points. For Gaussian noise, the SNR is sampled
log-uniformly over $[10^{-6},1]$ and evaluated on 100 SNR points. Test curves are
computed on the entire test set for each point across the noise-parameter range.

\paragraph{Noise schedules.}
\label{app:cifar-schedules}
\begin{wrapfigure}{r}{0.46\textwidth}
\vspace{-1.2em}
\centering
\input{shared/figures/tikz/cifar10_exit_equal_area_schedule}
\vspace{-0.6em}
\caption{Equal-information schedule for CIFAR-10 masking.}
\vspace{-0.2em}
\label{fig:app_cifar_equal_area_schedule}
\end{wrapfigure}
For CIFAR-10 sampling, we evaluate three schedule families: linear, cosine, and
equal-information schedules, following the D3PM convention of comparing simple
parametric schedules with an information-balanced schedule
\citep{Austin2021D3PM}. The linear schedule spaces the sampling points uniformly
in the native noise parameter, while the cosine schedule first applies a cosine
reparameterization and then maps the points back to the same endpoint range.

For the equal-information schedule, we first evaluate the conservation-law curve
on the validation set and compute its cumulative area along the noise path. The
sampling points are then chosen so that consecutive intervals contain equal
amounts of area, which makes each reverse step account for approximately the
same contribution to the likelihood-scale integral. As illustrated in
Figure~\ref{fig:app_cifar_equal_area_schedule}, this produces smaller steps in
regions where the curve is large and larger steps where the curve is small.
\FloatBarrier
\vspace{0.8em}

\begin{figure}[!htbp]
\centering
\vspace{2.4em}
\resizebox{\textwidth}{!}{\input{shared/figures/tikz/exp_cifar_conservation_curves}}
\caption{CIFAR-10 conservation-law evaluation curves for the image models. The
left panel shows the discrete-channel derivatives on their native axes, with
masked noise plotted against mask rate $t$ and uniform noise plotted against replacement
rate $1-t$. The middle panel shows the Gaussian curve as half-MSE versus
SNR $\rho$. The right panel compares the cumulative conservation-law area
along each model's native noising path, with zero corresponding to the clean
endpoint and one to the noisiest endpoint.}
\label{fig:app_cifar_conservation_curves}
\end{figure}
\FloatBarrier

\paragraph{Generation and sample quality.}
All CIFAR-10 sample grids and generation metrics use 1000 diffusion steps. For masked noise, generation uses an absorbing-state reverse
chain: the initial state is fully masked, the mask rate is decreased along the
chosen schedule, and the model repeatedly samples clean RGB-token proposals;
tokens that have already been revealed remain fixed, while still-masked tokens
are updated by the conditional masking transition, following the absorbing
discrete-diffusion construction of \citet{Austin2021D3PM}. For uniform noise, we use a
D3PM-style categorical reverse-posterior sampler for the $K$-ary symmetric
channel \citep{Austin2021D3PM}; each reverse transition combines the model
posterior over the clean image with the forward channel kernels so that the
marginal noising schedule is preserved. For Gaussian noise, generation is
performed in the learned continuous RGB-token embedding space with a log-SNR
schedule and a DDPM/score-SDE-style denoising chain
\citep{Ho2020DDPM,Song2021SDE}; the Gaussian-SDE row uses the stochastic
Gaussian bridge sampler with uniform spacing in log SNR. FID and IS are computed
from 50,000 generated images converted to unsigned 8-bit RGB tensors. FID uses
the 2048-dimensional Inception-v3 feature statistic against the corresponding
real CIFAR-10 split, and IS uses the Inception-v3 class-posterior statistic with
10 splits \citep{Heusel2017FID,Salimans2016ImprovedGAN}. When the real split
contains fewer than 50,000 images, it is repeated only to match the generated
image count for metric accumulation.

\paragraph{Resources.}
Each CIFAR-10 simulation used eight NVIDIA RTX 5090 GPUs. Dataset preparation is
CPU-only and is performed once before training.

%% file: shared/figures/tikz/exp_markov_channel_comparison_curves.tex
\begin{tikzpicture}
\begin{groupplot}[
group style={group size=3 by 1, horizontal sep=1.75cm},
width=0.30\linewidth,
height=0.29\linewidth,
grid=both,
major grid style={draw=gray!25},
minor grid style={draw=gray!15},
tick label style={font=\footnotesize},
label style={font=\footnotesize},
title style={font=\footnotesize},
legend style={draw=none, fill=white, fill opacity=0.85, text opacity=1, font=\scriptsize},
every axis plot/.append style={mark size=0.8pt},
]
\nextgroupplot[
xlabel={Mask rate $t$},
ylabel={GEXIT integrand},
xmin=0,
xmax=1,
title={Masked},
legend cell align=left,
legend pos=north west,
]
\addplot+[line width=1.15pt, color=teal!70!black]
table [x=native_x, y=model_integrand, col sep=comma] {shared/figures/data/markov_remote/channel_comparison/exit_overlay.csv};
\addlegendentry{Model}

\addplot+[line width=1.15pt, dashed, color=black!70]
table [x=native_x, y=reference_integrand, col sep=comma] {shared/figures/data/markov_remote/channel_comparison/exit_overlay.csv};
\addlegendentry{BCJR}

\nextgroupplot[
xlabel={Replace rate $1-t$},
ylabel={GEXIT integrand},
xmin=0,
xmax=1,
title={Uniform},
legend cell align=left,
legend pos=south east,
]
\addplot+[line width=1.15pt, color=blue!70!black]
table [x expr=1-\thisrow{native_x}, y=model_integrand, col sep=comma] {shared/figures/data/markov_remote/channel_comparison/qsc_overlay.csv};
\addlegendentry{Model}

\addplot+[line width=1.15pt, dashed, color=black!70]
table [x expr=1-\thisrow{native_x}, y=reference_integrand, col sep=comma] {shared/figures/data/markov_remote/channel_comparison/qsc_overlay.csv};
\addlegendentry{BCJR}

\nextgroupplot[
xlabel={SNR $\rho$},
ylabel={Half-MMSE},
title={Gaussian},
legend cell align=left,
legend pos=north east,
]
\addplot+[line width=1.15pt, color=red!75!black]
table [x=snr, y=model_half_mmse, col sep=comma] {shared/figures/data/markov_remote/channel_comparison/immse_k128.csv};
\addlegendentry{Model}

\addplot+[line width=1.15pt, dashed, color=black!70]
table [x=snr, y=true_half_mmse, col sep=comma] {shared/figures/data/markov_remote/channel_comparison/immse_k128.csv};
\addlegendentry{BCJR}
\end{groupplot}
\end{tikzpicture}

%% file: shared/figures/tikz/cifar10_exit_equal_area_schedule.tex
\begin{tikzpicture}
\begin{axis}[
width=\linewidth,
height=0.64\linewidth,
xlabel={$s$},
ylabel={GEXIT curve},
xmin=0,
xmax=1,
ymin=0,
ymax=5.6,
grid=both,
major grid style={draw=gray!25},
minor grid style={draw=gray!15},
legend columns=2,
legend style={draw=none, fill=none, font=\scriptsize, at={(0.5,1.03)}, anchor=south},
legend cell align={left},
tick label style={font=\footnotesize},
label style={font=\small},
]
\path[name path=baseline] (axis cs:0,0) -- (axis cs:1,0);

\addplot+[name path=modelcurve, draw=none, mark=none, forget plot]
table [x=s, y=exit_gap_nats, col sep=comma] {shared/figures/data/cifar_suite/cifar10_rgb8_exit_bs2048_0850000/eval_5k_1000_ema_equal_info/ema/conservation_curve.csv};
\addplot[draw=none, fill=blue!30, fill opacity=0.25, forget plot]
fill between[of=modelcurve and baseline];

\addplot+[line width=1.0pt, mark size=0.5pt, color=blue!70!black]
table [x=s, y=exit_gap_nats, col sep=comma] {shared/figures/data/cifar_suite/cifar10_rgb8_exit_bs2048_0850000/eval_5k_1000_ema_equal_info/ema/conservation_curve.csv};
\addlegendentry{GEXIT curve points}

\tikzset{equalinfostep/.style={red!75!black, line width=0.55pt, densely dashed}}
\draw[equalinfostep] (axis cs:0.2283,0) -- (axis cs:0.2283,5.6);
\draw[equalinfostep] (axis cs:0.3973,0) -- (axis cs:0.3973,5.6);
\draw[equalinfostep] (axis cs:0.5344,0) -- (axis cs:0.5344,5.6);
\draw[equalinfostep] (axis cs:0.6524,0) -- (axis cs:0.6524,5.6);
\draw[equalinfostep] (axis cs:0.7568,0) -- (axis cs:0.7568,5.6);
\draw[equalinfostep] (axis cs:0.8499,0) -- (axis cs:0.8499,5.6);
\draw[equalinfostep] (axis cs:0.9322,0) -- (axis cs:0.9322,5.6);
\addlegendimage{red!75!black, line width=0.55pt, densely dashed}
\addlegendentry{Equal-info schedule}
\end{axis}
\end{tikzpicture}

%% file: shared/figures/tikz/exp_cifar_conservation_curves.tex
\begin{tikzpicture}
\begin{groupplot}[
group style={group size=3 by 1, horizontal sep=0.95cm},
width=0.38\linewidth,
height=0.35\linewidth,
grid=both,
major grid style={draw=gray!25},
minor grid style={draw=gray!15},
tick label style={font=\footnotesize},
label style={font=\footnotesize},
title style={font=\footnotesize},
legend style={draw=none, fill=none, font=\scriptsize},
]
\nextgroupplot[
xlabel={Mask rate $t$ / replace rate $1-t$},
ylabel={CE derivative},
xmin=0,
xmax=1,
legend cell align=left,
legend pos=north west,
]
\addplot+[line width=1.15pt, color=teal!70!black, mark=none]
coordinates {
(0,1.1238105)
(0.03125,1.172622)
(0.0625,1.224135)
(0.09375,1.2796377)
(0.125,1.3379941)
(0.15625,1.3986476)
(0.1875,1.4633669)
(0.21875,1.5299445)
(0.25,1.599326)
(0.28125,1.670437)
(0.3125,1.7443274)
(0.34375,1.818789)
(0.375,1.8960545)
(0.40625,1.974245)
(0.4375,2.0523069)
(0.46875,2.1320998)
(0.5,2.2120246)
(0.53125,2.2928715)
(0.5625,2.3741819)
(0.59375,2.4572677)
(0.625,2.5403714)
(0.65625,2.6265115)
(0.6875,2.7155543)
(0.71875,2.8089629)
(0.75,2.9072668)
(0.78125,3.0130224)
(0.8125,3.1284421)
(0.84375,3.2573895)
(0.875,3.4050117)
(0.90625,3.5831285)
(0.9375,3.8147186)
(0.96875,4.1673486)
(1,5.4767579)
};
\addlegendentry{Masked}

\addplot+[line width=1.15pt, color=blue!70!black, mark=none]
coordinates {
(0.0001,2.3538735)
(0.031224805,1.9667204)
(0.062349609,1.9855342)
(0.093474414,2.0225087)
(0.12459922,2.0666562)
(0.15572402,2.1164073)
(0.18684883,2.1693046)
(0.21797363,2.2253061)
(0.24909844,2.284031)
(0.28022324,2.3436707)
(0.31134805,2.4059122)
(0.34247285,2.4693356)
(0.37359766,2.5340003)
(0.40472246,2.5989691)
(0.43584727,2.6643949)
(0.46697207,2.7301962)
(0.49809687,2.7952272)
(0.52922168,2.8596941)
(0.56034648,2.9234371)
(0.59147129,2.9847626)
(0.62259609,3.0435416)
(0.6537209,3.0988541)
(0.6848457,3.1489528)
(0.71597051,3.1920631)
(0.74709531,3.2252492)
(0.77822012,3.2453688)
(0.80934492,3.2462862)
(0.84046973,3.2200758)
(0.87159453,3.1522225)
(0.90271934,3.0176643)
(0.93384414,2.754527)
(0.96496895,2.1907191)
(0.99609375,0.0006883496)
};
\addlegendentry{Uniform}

\nextgroupplot[
xlabel={SNR $\rho$},
ylabel={Half-MSE},
xmode=log,
ymin=0,
legend cell align=left,
legend pos=north east,
]
\addplot+[line width=1.15pt, color=red!75!black, mark=none]
coordinates {
(0.0001,42.67543)
(0.00011850803,39.887825)
(0.00014044153,37.310934)
(0.00016643449,34.657726)
(0.00019723824,32.577262)
(0.00023374315,30.381829)
(0.0002770044,28.264303)
(0.00032827246,25.963911)
(0.00038902922,24.194014)
(0.00046103086,22.359194)
(0.00054635859,20.623494)
(0.0006474788,19.01859)
(0.00076731437,17.548935)
(0.00090932914,16.125638)
(0.0010776281,14.824147)
(0.0012770758,13.576464)
(0.0015134373,12.385692)
(0.0017935448,11.38283)
(0.0021254946,10.375872)
(0.0025188817,9.4716361)
(0.0029850771,8.6412864)
(0.0035375561,7.8493123)
(0.004192288,7.1307408)
(0.0049681979,6.4705762)
(0.0058877135,5.8560082)
(0.0069774132,5.3075354)
(0.0082687949,4.8072081)
(0.0097991859,4.3436165)
(0.011612822,3.9196732)
(0.013762127,3.5297875)
(0.016309225,3.1826469)
(0.019327742,2.8507166)
(0.022904926,2.5697151)
(0.027144176,2.3064161)
(0.032168028,2.0674542)
(0.038121697,1.8560633)
(0.045177272,1.6588336)
(0.053538694,1.4870796)
(0.063447652,1.3276958)
(0.075190562,1.1876673)
(0.089106854,1.0613207)
(0.10559878,0.94638251)
(0.12514303,0.84213794)
(0.14830454,0.74965722)
(0.17575279,0.6668648)
(0.20828116,0.59265351)
(0.2468299,0.52690002)
(0.29251326,0.4684314)
(0.3466517,0.41445055)
(0.4108101,0.36718929)
(0.48684295,0.32483528)
(0.57694799,0.28653282)
(0.6837297,0.2529926)
(0.81027459,0.22261492)
(0.96024045,0.19600123)
(1.137962,0.17210066)
(1.3485764,0.1506871)
(1.5981713,0.13231238)
(1.8939613,0.11556757)
(2.2444963,0.10068645)
(2.6599083,0.087736225)
(3.1522049,0.076198502)
(3.7356159,0.066283941)
(4.4270048,0.057373714)
(5.2463562,0.0496911)
(6.2173534,0.043031973)
(7.368063,0.037242295)
(8.7317463,0.03214002)
(10.34782,0.027816277)
(12.262998,0.024031655)
(14.532638,0.02073804)
(17.222342,0.01789896)
(20.409859,0.01545952)
(24.187321,0.013357356)
(28.663918,0.011561591)
(33.969044,0.0099983985)
(40.256045,0.0086399161)
(47.706646,0.0074795354)
(56.536206,0.006458761)
(66.999944,0.0055454142)
(79.400314,0.0047280984)
(94.095747,0.0039796732)
(111.51102,0.0032878815)
(132.14951,0.0026442261)
(156.60778,0.002058669)
(185.59279,0.0015532103)
(219.94236,0.0011155692)
(260.64936,0.0007624746)
(308.89042,0.00049185353)
(366.05995,0.00029239067)
(433.81044,0.00016051825)
(514.1002,8.0367717e-05)
(609.25002,3.5811487e-05)
(722.01019,1.3527479e-05)
(855.64006,4.3115738e-06)
(1014.0022,1.1900964e-06)
(1201.674,2.6982385e-07)
(1424.0802,2.5817022e-08)
(1687.6494,5.8055442e-09)
(2000,2.4155513e-13)
};
\addlegendentry{Gaussian}

\nextgroupplot[
xlabel={Noise progress},
ylabel={Cumulative area},
xmin=0,
xmax=1,
ymin=0,
legend cell align=left,
legend pos=north west,
]
\addplot+[line width=1.15pt, color=teal!70!black, mark=none]
coordinates {
(0,0)
(0.03125,0.035881757)
(0.0625,0.073331086)
(0.09375,0.11245253)
(0.125,0.15335303)
(0.15625,0.19611306)
(0.1875,0.24083203)
(0.21875,0.28760252)
(0.25,0.33649738)
(0.28125,0.38758742)
(0.3125,0.44094312)
(0.34375,0.49661681)
(0.375,0.55466124)
(0.40625,0.61513467)
(0.4375,0.67804954)
(0.46875,0.7434309)
(0.5,0.81130784)
(0.53125,0.88169684)
(0.5625,0.95461955)
(0.59375,1.0301109)
(0.625,1.1081991)
(0.65625,1.1889316)
(0.6875,1.2724014)
(0.71875,1.358722)
(0.75,1.4480381)
(0.78125,1.5405426)
(0.8125,1.636503)
(0.84375,1.7362816)
(0.875,1.8403816)
(0.90625,1.9495713)
(0.9375,2.0651626)
(0.96875,2.1898824)
(1,2.3405716)
};
\addlegendentry{Masked}

\addplot+[line width=1.15pt, color=blue!70!black, mark=none]
coordinates {
(0.00010039216,0)
(0.031347255,0.06723882)
(0.062594118,0.1287454)
(0.09384098,0.19112017)
(0.12508784,0.2547574)
(0.15633471,0.31985592)
(0.18758157,0.38655189)
(0.21882843,0.45494259)
(0.25007529,0.52511871)
(0.28132216,0.59713687)
(0.31256902,0.67105179)
(0.34381588,0.74692236)
(0.37506275,0.82478628)
(0.40630961,0.90466762)
(0.43755647,0.98657821)
(0.46880333,1.070531)
(0.5000502,1.1565199)
(0.53129706,1.244524)
(0.56254392,1.3345234)
(0.59379078,1.4264692)
(0.62503765,1.5202841)
(0.65628451,1.6158745)
(0.68753137,1.7131054)
(0.71877824,1.8117869)
(0.7500251,1.9116557)
(0.78127196,2.012354)
(0.81251882,2.1133798)
(0.84376569,2.2140119)
(0.87501255,2.3131802)
(0.90625941,2.4091984)
(0.93750627,2.4990276)
(0.96875314,2.5759875)
(1,2.6100911)
};
\addlegendentry{Uniform}

\addplot+[line width=1.15pt, color=red!75!black, mark=none]
coordinates {
(0,0)
(0.01010101,8.3191929e-07)
(0.02020202,4.7853829e-06)
(0.03030303,3.5436677e-05)
(0.04040404,0.00016542669)
(0.050505051,0.00058111571)
(0.060606061,0.0017236123)
(0.070707071,0.0044053559)
(0.080808081,0.0097658677)
(0.090909091,0.019186239)
(0.1010101,0.034186198)
(0.11111111,0.056173381)
(0.12121212,0.085946887)
(0.13131313,0.12365327)
(0.14141414,0.16896089)
(0.15151515,0.22080984)
(0.16161616,0.27785232)
(0.17171717,0.33865028)
(0.18181818,0.40157403)
(0.19191919,0.46524303)
(0.2020202,0.52866343)
(0.21212121,0.59121283)
(0.22222222,0.65251251)
(0.23232323,0.71233951)
(0.24242424,0.77070721)
(0.25252525,0.82768174)
(0.26262626,0.88325047)
(0.27272727,0.93747142)
(0.28282828,0.99043436)
(0.29292929,1.042196)
(0.3030303,1.0928063)
(0.31313131,1.1422668)
(0.32323232,1.1905334)
(0.33333333,1.2376594)
(0.34343434,1.2836737)
(0.35353535,1.3285243)
(0.36363636,1.3722242)
(0.37373737,1.4148131)
(0.38383838,1.4562303)
(0.39393939,1.4964384)
(0.4040404,1.5354405)
(0.41414141,1.5732124)
(0.42424242,1.6097504)
(0.43434343,1.6449581)
(0.44444444,1.6788402)
(0.45454545,1.7114482)
(0.46464646,1.7427433)
(0.47474747,1.7727452)
(0.48484848,1.8014681)
(0.49494949,1.8289314)
(0.50505051,1.8551662)
(0.51515152,1.880172)
(0.52525253,1.9040043)
(0.53535354,1.9266805)
(0.54545455,1.9482035)
(0.55555556,1.9686352)
(0.56565657,1.988026)
(0.57575758,2.0064135)
(0.58585859,2.0238466)
(0.5959596,2.0403614)
(0.60606061,2.0559731)
(0.61616162,2.0707076)
(0.62626263,2.0846198)
(0.63636364,2.0977426)
(0.64646465,2.1101131)
(0.65656566,2.1217674)
(0.66666667,2.1327296)
(0.67676768,2.1430426)
(0.68686869,2.1527182)
(0.6969697,2.1618034)
(0.70707071,2.170335)
(0.71717172,2.1783243)
(0.72727273,2.1858029)
(0.73737374,2.1927918)
(0.74747475,2.1993111)
(0.75757576,2.2053828)
(0.76767677,2.2110396)
(0.77777778,2.2163072)
(0.78787879,2.221203)
(0.7979798,2.2257507)
(0.80808081,2.2299665)
(0.81818182,2.2338647)
(0.82828283,2.2374706)
(0.83838384,2.2407955)
(0.84848485,2.2438591)
(0.85858586,2.2466876)
(0.86868687,2.249289)
(0.87878788,2.2516773)
(0.88888889,2.2538661)
(0.8989899,2.2558683)
(0.90909091,2.2577003)
(0.91919192,2.2593746)
(0.92929293,2.2608974)
(0.93939394,2.2622859)
(0.94949495,2.2635536)
(0.95959596,2.2647021)
(0.96969697,2.2657374)
(0.97979798,2.2666721)
(0.98989899,2.2675183)
(1,2.268282)
};
\addlegendentry{Gaussian}
\end{groupplot}
\end{tikzpicture}

%% file: meta/checklist.tex
\section*{NeurIPS Paper Checklist}

\begin{enumerate}

\item {\bf Claims}
    \item[] Question: Do the main claims made in the abstract and introduction accurately reflect the paper's contributions and scope?
    \item[] Answer: \answerYes{} 
    \item[] Justification: The abstract and introduction match the paper's scope: we derive a (mis)matched GEXIT/area-law likelihood characterization for diffusion-style denoising objectives under masked, uniform, and Gaussian noise, and we empirically validate finite-capacity gaps on synthetic Markov sources, \texttt{text8}, and CIFAR-10 (Sections~1--5).
    \item[] Guidelines:
    \begin{itemize}
        \item The answer \answerNA{} means that the abstract and introduction do not include the claims made in the paper.
        \item The abstract and/or introduction should clearly state the claims made, including the contributions made in the paper and important assumptions and limitations. A \answerNo{} or \answerNA{} answer to this question will not be perceived well by the reviewers. 
        \item The claims made should match theoretical and experimental results, and reflect how much the results can be expected to generalize to other settings. 
        \item It is fine to include aspirational goals as motivation as long as it is clear that these goals are not attained by the paper. 
    \end{itemize}

\item {\bf Limitations}
    \item[] Question: Does the paper discuss the limitations of the work performed by the authors?
    \item[] Answer: \answerYes{} 
    \item[] Justification: Section~\ref{sec:limitations} discusses the main theoretical assumptions, the empirical scope of the experiments, and the computational limitation of the Gaussian conditional-mean computation for large discrete supports.
    \item[] Guidelines:
    \begin{itemize}
        \item The answer \answerNA{} means that the paper has no limitation while the answer \answerNo{} means that the paper has limitations, but those are not discussed in the paper. 
        \item The authors are encouraged to create a separate ``Limitations'' section in their paper.
        \item The paper should point out any strong assumptions and how robust the results are to violations of these assumptions (e.g., independence assumptions, noiseless settings, model well-specification, asymptotic approximations only holding locally). The authors should reflect on how these assumptions might be violated in practice and what the implications would be.
        \item The authors should reflect on the scope of the claims made, e.g., if the approach was only tested on a few datasets or with a few runs. In general, empirical results often depend on implicit assumptions, which should be articulated.
        \item The authors should reflect on the factors that influence the performance of the approach. For example, a facial recognition algorithm may perform poorly when image resolution is low or images are taken in low lighting. Or a speech-to-text system might not be used reliably to provide closed captions for online lectures because it fails to handle technical jargon.
        \item The authors should discuss the computational efficiency of the proposed algorithms and how they scale with dataset size.
        \item If applicable, the authors should discuss possible limitations of their approach to address problems of privacy and fairness.
        \item While the authors might fear that complete honesty about limitations might be used by reviewers as grounds for rejection, a worse outcome might be that reviewers discover limitations that aren't acknowledged in the paper. The authors should use their best judgment and recognize that individual actions in favor of transparency play an important role in developing norms that preserve the integrity of the community. Reviewers will be specifically instructed to not penalize honesty concerning limitations.
    \end{itemize}

\item {\bf Theory assumptions and proofs}
    \item[] Question: For each theoretical result, does the paper provide the full set of assumptions and a complete (and correct) proof?
    \item[] Answer: \answerYes{} 
    \item[] Justification: Assumptions for the GEXIT setting (admissible, smooth, degraded, coordinatewise channels) are stated in Section~2, and proofs for the main derivative/area identities (including masked and uniform-replacement specializations) are provided in the appendix (Appendix \ref{app:proofs}).
    \item[] Guidelines:
    \begin{itemize}
        \item The answer \answerNA{} means that the paper does not include theoretical results. 
        \item All the theorems, formulas, and proofs in the paper should be numbered and cross-referenced.
        \item All assumptions should be clearly stated or referenced in the statement of any theorems.
        \item The proofs can either appear in the main paper or the supplemental material, but if they appear in the supplemental material, the authors are encouraged to provide a short proof sketch to provide intuition. 
        \item Inversely, any informal proof provided in the core of the paper should be complemented by formal proofs provided in appendix or supplemental material.
        \item Theorems and Lemmas that the proof relies upon should be properly referenced. 
    \end{itemize}

    \item {\bf Experimental result reproducibility}
    \item[] Question: Does the paper fully disclose all the information needed to reproduce the main experimental results of the paper to the extent that it affects the main claims and/or conclusions of the paper (regardless of whether the code and data are provided or not)?
\item[] Answer: \answerYes{} 
    \item[] Justification: The paper specifies the key model/optimization settings in the main text and provides a consolidated reproducibility appendix with dataset preparation, splits, architectures, optimization hyperparameters, seeds, evaluation grids, samplers, and metric definitions sufficient to rerun training/evaluation and regenerate the reported tables/figures (Appendix~\ref{app:repro}).
    \item[] Guidelines:
    \begin{itemize}
        \item The answer \answerNA{} means that the paper does not include experiments.
        \item If the paper includes experiments, a \answerNo{} answer to this question will not be perceived well by the reviewers: Making the paper reproducible is important, regardless of whether the code and data are provided or not.
        \item If the contribution is a dataset and\slash or model, the authors should describe the steps taken to make their results reproducible or verifiable. 
        \item Depending on the contribution, reproducibility can be accomplished in various ways. For example, if the contribution is a novel architecture, describing the architecture fully might suffice, or if the contribution is a specific model and empirical evaluation, it may be necessary to either make it possible for others to replicate the model with the same dataset, or provide access to the model. In general. releasing code and data is often one good way to accomplish this, but reproducibility can also be provided via detailed instructions for how to replicate the results, access to a hosted model (e.g., in the case of a large language model), releasing of a model checkpoint, or other means that are appropriate to the research performed.
        \item While NeurIPS does not require releasing code, the conference does require all submissions to provide some reasonable avenue for reproducibility, which may depend on the nature of the contribution. For example
        \begin{enumerate}
            \item If the contribution is primarily a new algorithm, the paper should make it clear how to reproduce that algorithm.
            \item If the contribution is primarily a new model architecture, the paper should describe the architecture clearly and fully.
            \item If the contribution is a new model (e.g., a large language model), then there should either be a way to access this model for reproducing the results or a way to reproduce the model (e.g., with an open-source dataset or instructions for how to construct the dataset).
            \item We recognize that reproducibility may be tricky in some cases, in which case authors are welcome to describe the particular way they provide for reproducibility. In the case of closed-source models, it may be that access to the model is limited in some way (e.g., to registered users), but it should be possible for other researchers to have some path to reproducing or verifying the results.
        \end{enumerate}
    \end{itemize}

\item {\bf Open access to data and code}
    \item[] Question: Does the paper provide open access to the data and code, with sufficient instructions to faithfully reproduce the main experimental results, as described in supplemental material?
    \item[] Answer: \answerYes{} 
    \item[] Justification: The main text and Appendix~\ref{app:repro} provides sufficient details to reproduce the experiments, including architectures, training procedures, and hyperparameters. Code will be released upon acceptance. The submission is accompanied by an anonymized supplementary ZIP file containing the implementation.
    \item[] Guidelines:
    \begin{itemize}
        \item The answer \answerNA{} means that paper does not include experiments requiring code.
        \item Please see the NeurIPS code and data submission guidelines (\url{https://neurips.cc/public/guides/CodeSubmissionPolicy}) for more details.
        \item While we encourage the release of code and data, we understand that this might not be possible, so \answerNo{} is an acceptable answer. Papers cannot be rejected simply for not including code, unless this is central to the contribution (e.g., for a new open-source benchmark).
        \item The instructions should contain the exact command and environment needed to run to reproduce the results. See the NeurIPS code and data submission guidelines (\url{https://neurips.cc/public/guides/CodeSubmissionPolicy}) for more details.
        \item The authors should provide instructions on data access and preparation, including how to access the raw data, preprocessed data, intermediate data, and generated data, etc.
        \item The authors should provide scripts to reproduce all experimental results for the new proposed method and baselines. If only a subset of experiments are reproducible, they should state which ones are omitted from the script and why.
        \item At submission time, to preserve anonymity, the authors should release anonymized versions (if applicable).
        \item Providing as much information as possible in supplemental material (appended to the paper) is recommended, but including URLs to data and code is permitted.
    \end{itemize}

\item {\bf Experimental setting/details}
    \item[] Question: Does the paper specify all the training and test details (e.g., data splits, hyperparameters, how they were chosen, type of optimizer) necessary to understand the results?
    \item[] Answer: \answerYes{} 
    \item[] Justification: Dataset descriptions/splits and the key architectural and optimization hyperparameters are given in the experiments section, with full details in the main text and appendix (Section~\ref{sec:implementation} and Appendix~\ref{app:repro}).
    \item[] Guidelines:
    \begin{itemize}
        \item The answer \answerNA{} means that the paper does not include experiments.
        \item The experimental setting should be presented in the core of the paper to a level of detail that is necessary to appreciate the results and make sense of them.
        \item The full details can be provided either with the code, in appendix, or as supplemental material.
    \end{itemize}

\item {\bf Experiment statistical significance}
    \item[] Question: Does the paper report error bars suitably and correctly defined or other appropriate information about the statistical significance of the experiments?
    \item[] Answer: \answerYes{} 
    \item[] Justification: The reported large-scale neural experiments use one training seed per objective. The error bars reported for sample-quality metrics therefore quantify fixed-checkpoint evaluation variability, not variability across random initializations or full retraining runs.

    \item[] Guidelines:
    \begin{itemize}
        \item The answer \answerNA{} means that the paper does not include experiments.
        \item The authors should answer \answerYes{} if the results are accompanied by error bars, confidence intervals, or statistical significance tests, at least for the experiments that support the main claims of the paper.
        \item The factors of variability that the error bars are capturing should be clearly stated (for example, train/test split, initialization, random drawing of some parameter, or overall run with given experimental conditions).
        \item The method for calculating the error bars should be explained (closed form formula, call to a library function, bootstrap, etc.)
        \item The assumptions made should be given (e.g., Normally distributed errors).
        \item It should be clear whether the error bar is the standard deviation or the standard error of the mean.
        \item It is OK to report 1-sigma error bars, but one should state it. The authors should preferably report a 2-sigma error bar than state that they have a 96\% confidence interval (CI), if the hypothesis of Normality of errors is not verified.
        \item For asymmetric distributions, the authors should be careful not to show in tables or figures symmetric error bars that would yield results that are out of range (e.g., negative error rates).
        \item If error bars are reported in tables or plots, the authors should explain in the text how they were calculated and reference the corresponding figures or tables in the text.
    \end{itemize}

\item {\bf Experiments compute resources}
    \item[] Question: For each experiment, does the paper provide sufficient information on the computer resources (type of compute workers, memory, time of execution) needed to reproduce the experiments?
    \item[] Answer: \answerYes{} 
    \item[] Justification: The reproducibility appendix reports the GPU type used for the \texttt{text8} and CIFAR-10 simulations and states that dataset preparation is CPU-only; the synthetic experiments are lightweight and described by their training configuration (Appendix~\ref{app:repro}).
    \item[] Guidelines:
    \begin{itemize}
        \item The answer \answerNA{} means that the paper does not include experiments.
        \item The paper should indicate the type of compute workers CPU or GPU, internal cluster, or cloud provider, including relevant memory and storage.
        \item The paper should provide the amount of compute required for each of the individual experimental runs as well as estimate the total compute. 
        \item The paper should disclose whether the full research project required more compute than the experiments reported in the paper (e.g., preliminary or failed experiments that didn't make it into the paper). 
    \end{itemize}
    
\item {\bf Code of ethics}
    \item[] Question: Does the research conducted in the paper conform, in every respect, with the NeurIPS Code of Ethics \url{https://neurips.cc/public/EthicsGuidelines}?
    \item[] Answer: \answerYes{} 
    \item[] Justification: The work uses standard public benchmarks/synthetic data, does not involve human subjects or sensitive personal data collection, and is presented as methodological research with clear empirical scope.
    \item[] Guidelines:
    \begin{itemize}
        \item The answer \answerNA{} means that the authors have not reviewed the NeurIPS Code of Ethics.
        \item If the authors answer \answerNo, they should explain the special circumstances that require a deviation from the Code of Ethics.
        \item The authors should make sure to preserve anonymity (e.g., if there is a special consideration due to laws or regulations in their jurisdiction).
    \end{itemize}

\item {\bf Broader impacts}
    \item[] Question: Does the paper discuss both potential positive societal impacts and negative societal impacts of the work performed?
     \item[] Answer: \answerNA{} 
    \item[] Justification: This work is a theoretical and empirical analysis of likelihood-aligned diffusion objectives and does not introduce a deployed system, new dataset, or released high-capability generative model. The experiments use standard public benchmarks and synthetic data. Any downstream societal risks are therefore indirect and are those generally associated with improved generative modeling rather than specific to the proposed conservation-law analysis.
    \item[] Guidelines:
    \begin{itemize}
        \item The answer \answerNA{} means that there is no societal impact of the work performed.
        \item If the authors answer \answerNA{} or \answerNo, they should explain why their work has no societal impact or why the paper does not address societal impact.
        \item Examples of negative societal impacts include potential malicious or unintended uses (e.g., disinformation, generating fake profiles, surveillance), fairness considerations (e.g., deployment of technologies that could make decisions that unfairly impact specific groups), privacy considerations, and security considerations.
        \item The conference expects that many papers will be foundational research and not tied to particular applications, let alone deployments. However, if there is a direct path to any negative applications, the authors should point it out. For example, it is legitimate to point out that an improvement in the quality of generative models could be used to generate Deepfakes for disinformation. On the other hand, it is not needed to point out that a generic algorithm for optimizing neural networks could enable people to train models that generate Deepfakes faster.
        \item The authors should consider possible harms that could arise when the technology is being used as intended and functioning correctly, harms that could arise when the technology is being used as intended but gives incorrect results, and harms following from (intentional or unintentional) misuse of the technology.
        \item If there are negative societal impacts, the authors could also discuss possible mitigation strategies (e.g., gated release of models, providing defenses in addition to attacks, mechanisms for monitoring misuse, mechanisms to monitor how a system learns from feedback over time, improving the efficiency and accessibility of machine learning (ML)).
    \end{itemize}
    
\item {\bf Safeguards}
    \item[] Question: Does the paper describe safeguards that have been put in place for responsible release of data or models that have a high risk for misuse (e.g., pre-trained language models, image generators, or scraped datasets)?
    \item[] Answer: \answerNA{} 
    \item[] Justification: The paper does not introduce or release high-misuse-risk models or scraped datasets; experiments are on synthetic Markov data and standard benchmarks (\texttt{text8}, CIFAR-10).
    \item[] Guidelines:
    \begin{itemize}
        \item The answer \answerNA{} means that the paper poses no such risks.
        \item Released models that have a high risk for misuse or dual-use should be released with necessary safeguards to allow for controlled use of the model, for example by requiring that users adhere to usage guidelines or restrictions to access the model or implementing safety filters. 
        \item Datasets that have been scraped from the Internet could pose safety risks. The authors should describe how they avoided releasing unsafe images.
        \item We recognize that providing effective safeguards is challenging, and many papers do not require this, but we encourage authors to take this into account and make a best faith effort.
    \end{itemize}

\item {\bf Licenses for existing assets}
    \item[] Question: Are the creators or original owners of assets (e.g., code, data, models), used in the paper, properly credited and are the license and terms of use explicitly mentioned and properly respected?
    \item[] Answer: \answerYes{}
    \item[] Justification: The paper uses standard public benchmarks, \texttt{text8} and CIFAR-10, and cites their original sources. Published baseline results are attributed to the corresponding papers. The submitted code artifact is our own implementation and is released under the MIT license; dataset download/preprocessing scripts obtain the benchmarks from their public sources and do not redistribute the raw datasets. 
    \item[] Guidelines:
    \begin{itemize}
        \item The answer \answerNA{} means that the paper does not use existing assets.
        \item The authors should cite the original paper that produced the code package or dataset.
        \item The authors should state which version of the asset is used and, if possible, include a URL.
        \item The name of the license (e.g., CC-BY 4.0) should be included for each asset.
        \item For scraped data from a particular source (e.g., website), the copyright and terms of service of that source should be provided.
        \item If assets are released, the license, copyright information, and terms of use in the package should be provided. For popular datasets, \url{paperswithcode.com/datasets} has curated licenses for some datasets. Their licensing guide can help determine the license of a dataset.
        \item For existing datasets that are re-packaged, both the original license and the license of the derived asset (if it has changed) should be provided.
        \item If this information is not available online, the authors are encouraged to reach out to the asset's creators.
    \end{itemize}

\item {\bf New assets}
    \item[] Question: Are new assets introduced in the paper well documented and is the documentation provided alongside the assets?
    \item[] Answer: \answerNA{} 
    \item[] Justification: The paper does not currently release new datasets or trained models as part of the submission.
    \item[] Guidelines:
    \begin{itemize}
        \item The answer \answerNA{} means that the paper does not release new assets.
        \item Researchers should communicate the details of the dataset\slash code\slash model as part of their submissions via structured templates. This includes details about training, license, limitations, etc. 
        \item The paper should discuss whether and how consent was obtained from people whose asset is used.
        \item At submission time, remember to anonymize your assets (if applicable). You can either create an anonymized URL or include an anonymized zip file.
    \end{itemize}

\item {\bf Crowdsourcing and research with human subjects}
    \item[] Question: For crowdsourcing experiments and research with human subjects, does the paper include the full text of instructions given to participants and screenshots, if applicable, as well as details about compensation (if any)? 
    \item[] Answer: \answerNA{} 
    \item[] Justification: The work does not involve crowdsourcing or research with human subjects.
    \item[] Guidelines:
    \begin{itemize}
        \item The answer \answerNA{} means that the paper does not involve crowdsourcing nor research with human subjects.
        \item Including this information in the supplemental material is fine, but if the main contribution of the paper involves human subjects, then as much detail as possible should be included in the main paper. 
        \item According to the NeurIPS Code of Ethics, workers involved in data collection, curation, or other labor should be paid at least the minimum wage in the country of the data collector. 
    \end{itemize}

\item {\bf Institutional review board (IRB) approvals or equivalent for research with human subjects}
    \item[] Question: Does the paper describe potential risks incurred by study participants, whether such risks were disclosed to the subjects, and whether Institutional Review Board (IRB) approvals (or an equivalent approval/review based on the requirements of your country or institution) were obtained?
    \item[] Answer: \answerNA{} 
    \item[] Justification: No human-subject studies are performed.
    \item[] Guidelines:
    \begin{itemize}
        \item The answer \answerNA{} means that the paper does not involve crowdsourcing nor research with human subjects.
        \item Depending on the country in which research is conducted, IRB approval (or equivalent) may be required for any human subjects research. If you obtained IRB approval, you should clearly state this in the paper. 
        \item We recognize that the procedures for this may vary significantly between institutions and locations, and we expect authors to adhere to the NeurIPS Code of Ethics and the guidelines for their institution. 
        \item For initial submissions, do not include any information that would break anonymity (if applicable), such as the institution conducting the review.
    \end{itemize}

\item {\bf Declaration of large language model (LLM) usage}
    \item[] Question: Does the paper describe the usage of LLMs if it is an important, original, or non-standard component of the core methods in this research? Note that if the LLM is used only for writing, editing, or formatting purposes and does \emph{not} impact the core methodology, scientific rigor, or originality of the research, declaration is not required.
    \item[] Answer: \answerNA{} 
    \item[] Justification: LLMs are not an important or non-standard component of the proposed methods or experiments.
    \item[] Guidelines:
    \begin{itemize}
        \item The answer \answerNA{} means that the core method development in this research does not involve LLMs as any important, original, or non-standard components.
        \item Please refer to our LLM policy in the NeurIPS handbook for what should or should not be described.
    \end{itemize}

\end{enumerate}